\DeclareMathOperator*{\argmax}{arg\,max}
\newcommand*\diff{\mathop{}\!\mathrm{d}}
\newcommand{\rom}[1]{\uppercase\expandafter{\romannumeral #1\relax}}
\newenvironment{sproof}{%
  \proof}{\endproof}
\let\cite\citep
\title{Refining Diffusion Planner for Reliable Behavior Synthesis by Automatic Detection of Infeasible Plans}
\author{%
  Kyowoon Lee\thanks{Equal Contribution} \\
  UNIST \\
  \texttt{leekwoon@unist.ac.kr} \\
  \And
  Seongun Kim$^{*}$ \\
  KAIST \\
  \texttt{seongun@kaist.ac.kr} \\
  \And
  Jaesik Choi \\
  KAIST, INEEJI \\
  \texttt{jaesik.choi@kaist.ac.kr} \\
}
\begin{document}

\maketitle

\begin{abstract}
Diffusion-based planning has shown promising results in long-horizon, sparse-reward tasks by training trajectory diffusion models and conditioning the sampled trajectories using auxiliary guidance functions. However, due to their nature as generative models, diffusion models are not guaranteed to generate feasible plans, resulting in failed execution and precluding planners from being useful in safety-critical applications. In this work, we propose a novel approach to refine unreliable plans generated by diffusion models by providing refining guidance to error-prone plans. To this end, we suggest a new metric named \textit{restoration gap} for evaluating the quality of individual plans generated by the diffusion model. A restoration gap is estimated by a \textit{gap predictor} which produces \textit{restoration gap guidance} to refine a diffusion planner. We additionally present an attribution map regularizer to prevent adversarial refining guidance that could be generated from the sub-optimal gap predictor, which enables further refinement of infeasible plans. We demonstrate the effectiveness of our approach on three different benchmarks in offline control settings that require long-horizon planning. We also illustrate that our approach presents explainability by presenting the attribution maps of the gap predictor and highlighting error-prone transitions, allowing for a deeper understanding of the generated plans.
\end{abstract}

\section{Introduction}

\let\oldthefootnote\thefootnote

\let\thefootnote\relax
\makeatletter\def\Hy@Warning#1{}\makeatother
\footnotetext{
Codes are available at \href{https://github.com/leekwoon/rgg}{\url{https://github.com/leekwoon/rgg}}.
}

\let\thefootnote\oldthefootnote
\setcounter{footnote}{0}

Planning plays a crucial and efficient role in tackling decision-making problems when the dynamics are known, including board games and simulated robot control \cite{tassa2012synthesis, silver2016mastering, silver2017mastering, lee2018deep}. To plan for more general tasks with unknown dynamics, the agent needs to learn the dynamics model from experience. This approach is appealing since the dynamics model is independent of rewards, enabling it to adapt to new tasks in the same environment, while also taking advantage of the latest advancements from deep supervised learning to employ high-capacity models.

The most widely used techniques for learning dynamics models include autoregressive forward models \cite{deisenroth2011pilco, hafner2019learning, kaiser2019model}, which make predictions based on future time progression. Although an ideal forward model would provide significant benefits, there is a key challenge that the accuracy of the model directly affects the quality of the plan. As model inaccuracies accumulate over time \cite{ross2012agnostic, talvitie2014model, luo2018algorithmic, janner2019trust, voelcker2022value}, long-term planning using imprecise models might yield sub-optimal performances compared to those achievable through model-free techniques. Building upon the latest progress in generative models, recent studies have shown promise in transforming reinforcement learning (RL) problems into conditional sequence modeling, through the modeling of the joint distribution of sequences involving states, actions, and rewards \cite{lambert2021learning, chen2021decision, janner2021offline, janner2022planning}. For instance, Diffuser \cite{janner2022planning} introduces an effective framework for generating trajectories using a diffusion model with flexible constraints on the resulting trajectories through reward guidance in the sampling phase. Although these approaches have achieved notable performance on long-horizon tasks, they still face challenges in generating outputs with unreliable trajectories, referred to as artifacts, resulting in limited performance and unsuitability for deployment in safety-critical applications.

This paper presents an orthogonal approach aimed at enhancing the plan quality of the diffusion model.
We first propose a novel metric called \textit{restoration gap} that can automatically detect whether generated plans are feasible or not. We theoretically analyze that it could detect artifacts with bounded error probabilities under regularity conditions. The restoration gap directly evaluates the quality of generated plans by measuring their restorability through diffusion models in which plans are exposed to a certain degree of noise, as illustrated in Figure \ref{fig:restoration_gap_example}. A restoration gap is estimated by a function approximator which we name a \textit{gap predictor}.
The gap predictor provides an additional level of flexibility to the diffusion model, and we demonstrate its ability to efficiently improve low-quality plans by guiding the reduction of the estimated restoration gap through a process, which we call Restoration Gap Guidance (RGG). Furthermore, we propose a regularizer that prevents adversarial restoration gap guidance by utilizing an attribution map of the gap predictor. It effectively mitigates the risk of the plan being directed towards an unreliable plan, enabling further improvement in the planning performance.

The main contributions of this paper are summarized as follows: \textbf{(1)} We provide a novel metric to assess the quality of individual plans generated by the diffusion model with theoretical justification. \textbf{(2)} We propose a new generative process, Restoration Gap Guidance (RGG) which utilizes a gap predictor 
that estimates the restoration gap. \textbf{(3)} We show the effectiveness of our approach across three different benchmarks in offline control settings.

\section{Background}

\begin{figure*}
    \centering
    \includegraphics[width=0.97\linewidth]{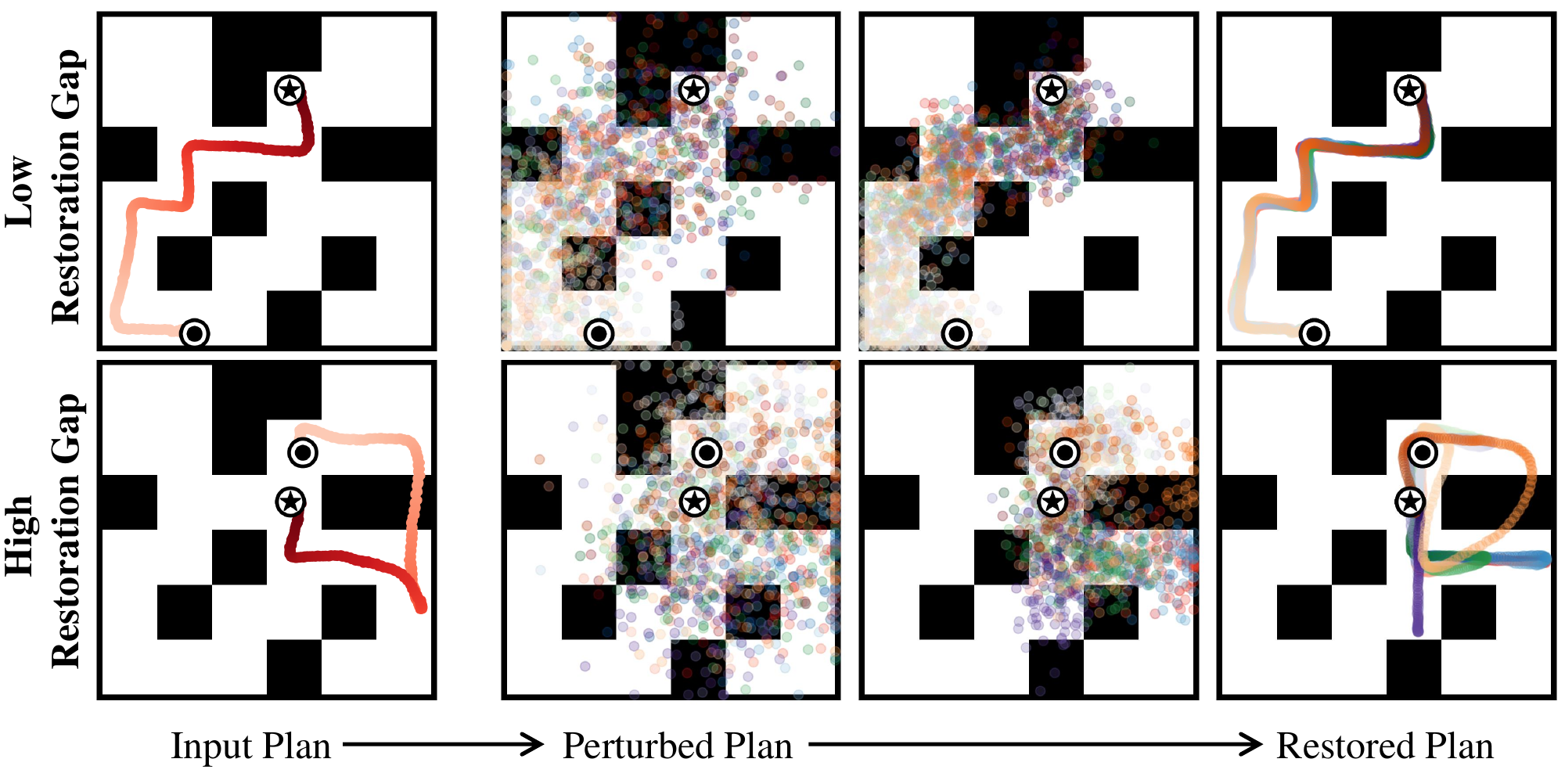}
    \caption{Illustration of two plans with low/high restoration gaps with a specified start \textcircled{\raisebox{-.03em}{\resizebox{.6em}{!}{\faicon{circle}}}} and goal \textcircled{\raisebox{-.03em}{\resizebox{.7em}{!}{\faicon{star}}}}. For each input plan, we first perturb it using Gaussian noise.
    We then remove the noise from the perturbed plan by simulating the reverse SDE which progressively transforms the perturbed plan into the initial plan by utilizing the score function (Section \ref{sec:reverse_sde}). The restoration gap is then computed as the expected $L_2$ distance between the input plan and the plan restored from noise corruption (Section \ref{sec:resotration_gap}). The top example exhibits a smaller restoration gap because of its successful restoration close to the original plan, while the bottom example has a larger restoration gap due to its poor restoration performance. Plans restored from various noise corruptions are differentiated by distinct colors.}
    \label{fig:restoration_gap_example}
    \vspace{-0.5cm}
\end{figure*}


\subsection{Planning with Diffusion Probabilistic Models}

We consider the reinforcement learning problem which aims to maximize the expected discounted sum of rewards $\mathbb{E}_{\pi}[\sum_{t=0}^{T}\gamma^tr(\boldsymbol{s}_t,\boldsymbol{a}_t)]$ where $\pi$ is a policy that defines a distribution over actions $\boldsymbol{a}_t$, $\boldsymbol{s}_t$ represents the states that undergo transition according to unknown discrete-time dynamics $\boldsymbol{s}_{t+1}=f(\boldsymbol{s}_t, \boldsymbol{a}_t)$, $r: \mathcal{S} \times \mathcal{A}\to \mathbb{R}$ is a reward function, and $\gamma \in (0,1]$ is the discount factor. Trajectory optimization solves this problem by finding the sequence of actions $\boldsymbol{a}_{0:T}^{*}$ that maximizes the expected discounted sum of rewards over planning horizon $T$:
\begin{align}
\label{eq:trajectory optimization}
    \boldsymbol{a}_{0:T}^{*} = \argmax_{\boldsymbol{a}_{0:T}}\mathcal{J}(\boldsymbol{\tau})=\argmax_{\boldsymbol{a}_{0:T}}\sum_{t=0}^{T}\gamma^tr(\boldsymbol{s}_t,\boldsymbol{a}_t),
\end{align}
where $\boldsymbol{\tau}=(\boldsymbol{s}_0, \boldsymbol{a}_0, \boldsymbol{s}_1, \boldsymbol{a}_1, ..., \boldsymbol{s}_t, \boldsymbol{a}_t)$ represents a trajectory and $\mathcal{J}(\boldsymbol{\tau)}$ denotes an objective value of that trajectory. This trajectory can be viewed as a particular form of two-dimensional sequence data:
\begin{align}
\label{eq:trajectory}
\boldsymbol{\tau}=\begin{bmatrix}
\boldsymbol{s}_{0} & \boldsymbol{s}_{1} & \multicolumn{1}{c}{\multirow{2}{*}{$\hdots$}} & \boldsymbol{s}_{T} \\
\boldsymbol{a}_{0} & \boldsymbol{a}_{1} & & \boldsymbol{a}_{T}
\end{bmatrix}.
\end{align}

Diffuser \cite{janner2022planning} is a trajectory planning model, which models a trajectory distribution by employing diffusion probabilistic models \cite{sohl2015deep, ho2020denoising}: 
\begin{align}
\label{eq:denoising process}
p_\theta(\boldsymbol{\tau}^0)=\int p(\boldsymbol{\tau}^N)\prod_{i=1}^{N} p_\theta(\boldsymbol{\tau}^{i-1}|\boldsymbol{\tau}^i)\diff\boldsymbol{\tau}^{1:N}
\end{align}
where $p(\boldsymbol{\tau}^N)$ is a standard Gaussian prior, $\boldsymbol{\tau}^0$ is a noiseless trajectory, and $p_{\theta}(\boldsymbol{\tau}^{i-1}|\boldsymbol{\tau}^i)$ is a denoising process which is a reverse of a forward process $q(\boldsymbol{\tau}^{i}|\boldsymbol{\tau}^{i-1})$ that gradually deteriorates the data structure by introducing noise. The denoising process is often parameterized as Gaussian with fixed timestep-dependent covariances: $p_{\theta}(\boldsymbol{\tau}^{i-1}|\boldsymbol{\tau}^i)=\mathcal{N}(\boldsymbol{\tau}^{i-1}|\boldsymbol{\mu}_{\theta}(\boldsymbol{\tau}^i,i), \boldsymbol{\Sigma}^i)$. Diffuser recasts the trajectory optimization problem as a conditional sampling with the conditional diffusion process under smoothness condition on $p(\mathcal{O}_{1:T}=1|\boldsymbol{\tau})$ \cite{sohl2015deep}:
\begin{align}
\label{eq:conditional diffusion process}
\Tilde{p}_\theta(\boldsymbol{\tau})=p(\boldsymbol{\tau}|\mathcal{O}_{1:T}=1) \propto p(\boldsymbol{\tau}) p(\mathcal{O}_{1:T}=1|\boldsymbol{\tau}),  \quad p_\theta(\boldsymbol{\tau}^{i-1}|\boldsymbol{\tau}^i, \mathcal{O}_{1:T}) \approx \mathcal{N}(\boldsymbol{\tau}^{i-1};\boldsymbol{\mu} + \boldsymbol{\Sigma} g, \boldsymbol{\Sigma})
\end{align}
where $\boldsymbol{\mu}, \boldsymbol{\Sigma}$ are the parameters of the denoising process $p_\theta(\boldsymbol{\tau}^{i-1}|\boldsymbol{\tau}^i)$, $\mathcal{O}_t$ is the optimality of timestep $t$ of trajectory with $p(\mathcal{O}_t=1)=\exp(\gamma^t r(\boldsymbol{s}_t,\boldsymbol{a}_t))$ and 
\begin{align}
\label{eq:g of conditional sampling}
g=\nabla_{\boldsymbol{\tau}} \log p (\mathcal{O}_{1:T}|\boldsymbol{\tau})|_{\boldsymbol{\tau}=\boldsymbol{\mu}}=\sum_{t=0}^T \gamma^t \nabla_{\boldsymbol{s}_t, \boldsymbol{a}_t} r(\boldsymbol{s}_t,\boldsymbol{a}_t)|_{(\boldsymbol{s}_t,\boldsymbol{a}_t)=\boldsymbol{\mu}_t} = \nabla\mathcal{J}(\boldsymbol{\mu}).
\end{align}
Therefore, a separate model $\mathcal{J}_{\phi}$ can be trained to predict the cumulative rewards of trajectory samples $\boldsymbol{\tau}^i$. By utilizing the gradients of $\mathcal{J}_{\phi}$, trajectories with high cumulative rewards can be generated. 

As part of the training procedure, Diffuser trains an $\boldsymbol{\epsilon}$-model to predict the source noise instead of training $\boldsymbol{\mu}_\theta$ as it turns out that learning $\boldsymbol{\epsilon}_\theta$ enables the use of a simplified objective, where $\boldsymbol{\mu}_\theta$ is easily recovered in a closed form \cite{ho2020denoising}:
\begin{align}
\label{eq:loss}
\mathcal{L}(\theta):=\mathbb{E}_{i,\boldsymbol{\epsilon},\boldsymbol{\tau}^0}[\|\boldsymbol{\epsilon} - \boldsymbol{\epsilon}_\theta(\boldsymbol{\tau}^i, i)\|^2],
\end{align}
where $i \in \{0, 1, ..., N\}$ is the diffusion timestep, $\boldsymbol{\epsilon} \sim \mathcal{N}(\mathbf{0},\mathbf{I})$ is the target noise, and $\boldsymbol{\tau}^i$ is the trajectory corrupted by the noise $\boldsymbol{\epsilon}$ from the noiseless trajectory $\boldsymbol{\tau}^0$.


\subsection{Generalizing Diffusion Probabilistic Models as a Stochastic Differential Equation (SDE)} \label{sec:reverse_sde}

The forward process in diffusion probabilistic models perturbs data structure by gradually adding Gaussian noises. Under an infinite number of noise scales, this forward process over continuous time can be represented as a stochastic differential equation (SDE) \cite{song2020score}:
\begin{align}
\label{eq:generalized_sde}
\diff\boldsymbol{\tau}=\mathbf{f}(\boldsymbol{\tau},t)\diff t + g(t) \diff \mathbf{w},
\end{align}
where $t \in (0, 1]$ is a continuous time variable for indexing diffusion timestep, $\mathbf{f}(\boldsymbol{\tau}, t)$ is the drift coefficient, $g(t)$ is the diffusion coefficient, and $\mathbf{w}$ is the standard Wiener process. Similarly, the denoising process can be defined by the following reverse-time SDE:
\begin{align}
\label{eq:generalized_reverse_sde}
\diff\boldsymbol{\tau}=[\mathbf{f}(\boldsymbol{\tau},t)-g(t)^2 \mathbf{s}_{\theta}(\boldsymbol{\tau},t)]\diff t + g(t) \diff \mathbf{\bar{w}},
\end{align}
where $\mathbf{\bar{w}}$ is the infinitesimal noise in the reverse time direction and $\mathbf{s}_{\theta}(\boldsymbol{\tau},t)$ is the learned score network which estimates the data score $\nabla_{\boldsymbol{\tau}}\log p_t(\boldsymbol{\tau})$. This score network can be replaced by the $\boldsymbol{\epsilon}$-model:
\begin{align}
\label{eq:epsilon_model_as_score}
\mathbf{s}_{\theta}(\boldsymbol{\tau},t) \approx \nabla_{\boldsymbol{\tau}}\log q(\boldsymbol{\tau}) = \mathbb{E}_{\boldsymbol{\tau^0}}[ \nabla_{\boldsymbol{\tau}}\log q(\boldsymbol{\tau}|\boldsymbol{\tau^0}) ] = \mathbb{E}_{\boldsymbol{\tau^0}}\left[ -\frac{\boldsymbol{\epsilon}_\theta(\boldsymbol{\tau}, t)}{C_t} \right] = -\frac{\boldsymbol{\epsilon}_\theta(\boldsymbol{\tau}, t)}{C_t},
\end{align}
where $C_t$ is a constant determined by the chosen perturbation strategies.

The solution of a forward SDE is a time-varying random variable $\boldsymbol{\tau}^t$.
Using the reparameterization trick \cite{kingma2013auto}, it is achieved by sampling a random noise $\boldsymbol{\epsilon}$ from a standard Gaussian distribution which is scaled by the target standard deviation $\sigma_t$ and shifted by the target mean:
\begin{align}
\label{eq:solution_forward_sde}
\boldsymbol{\tau}^t=\alpha_t\boldsymbol{\tau}^0 + \sigma_t\boldsymbol{\epsilon}, \;\;\; \boldsymbol{\epsilon} \sim \mathcal{N}(\mathbf{0}, \mathbf{I}),
\end{align}
where $\alpha_t: [0,1] \rightarrow[0,1]$ denotes a scalar function indicating the magnitude of the noiseless data $\boldsymbol{\tau}^0$, and ${\sigma}_t: [0, 1] \rightarrow [0, \infty)$ denotes a scalar function that determines the size of the noise $\boldsymbol{\epsilon}$. Depending on perturbation strategies for $\alpha_t$ and ${\sigma}_t$, two types of SDEs are commonly considered: the Variance Exploding SDE (VE-SDE) has $\alpha_t=1$ for all $t$; whereas the Variance Preserving (VP) SDE satisfies $\alpha_t^2 + {\sigma}_t^2 = 1$ for all $t$. Both VE and VP SDE change the data distribution to random Gaussian noise as $t$ moves from $0$ to $1$. In this work, we describe diffusion probabilistic models within the continuous-time framework using VE-SDE to simplify notation, as VE/VP SDEs are mathematically equivalent under scale translations \cite{song2020score}.

For VE SDE, the forward process and denoising process are defined by the following SDEs:
\begin{align}
\label{eq:ve_sde}
& \text{(Forward SDE)    } \diff\boldsymbol{\tau}=\sqrt{\frac{\diff[{\sigma}^2_t]}{\diff t}} \diff \mathbf{w} \\
& \text{(Reverse SDE)    } \diff\boldsymbol{\tau}=\left[-\frac{\diff[{\sigma}^2_t]}{\diff t}\mathbf{s}_{\theta}(\boldsymbol{\tau},t)\right] \diff t + \sqrt{\frac{\diff[{\sigma}^2_t]}{\diff t}} \diff \mathbf{\bar{w}}.
\end{align}

\section{Restoration Gap}\label{sec:resotration_gap}


\begin{figure*}
    \centering
    \includegraphics[width=\linewidth]{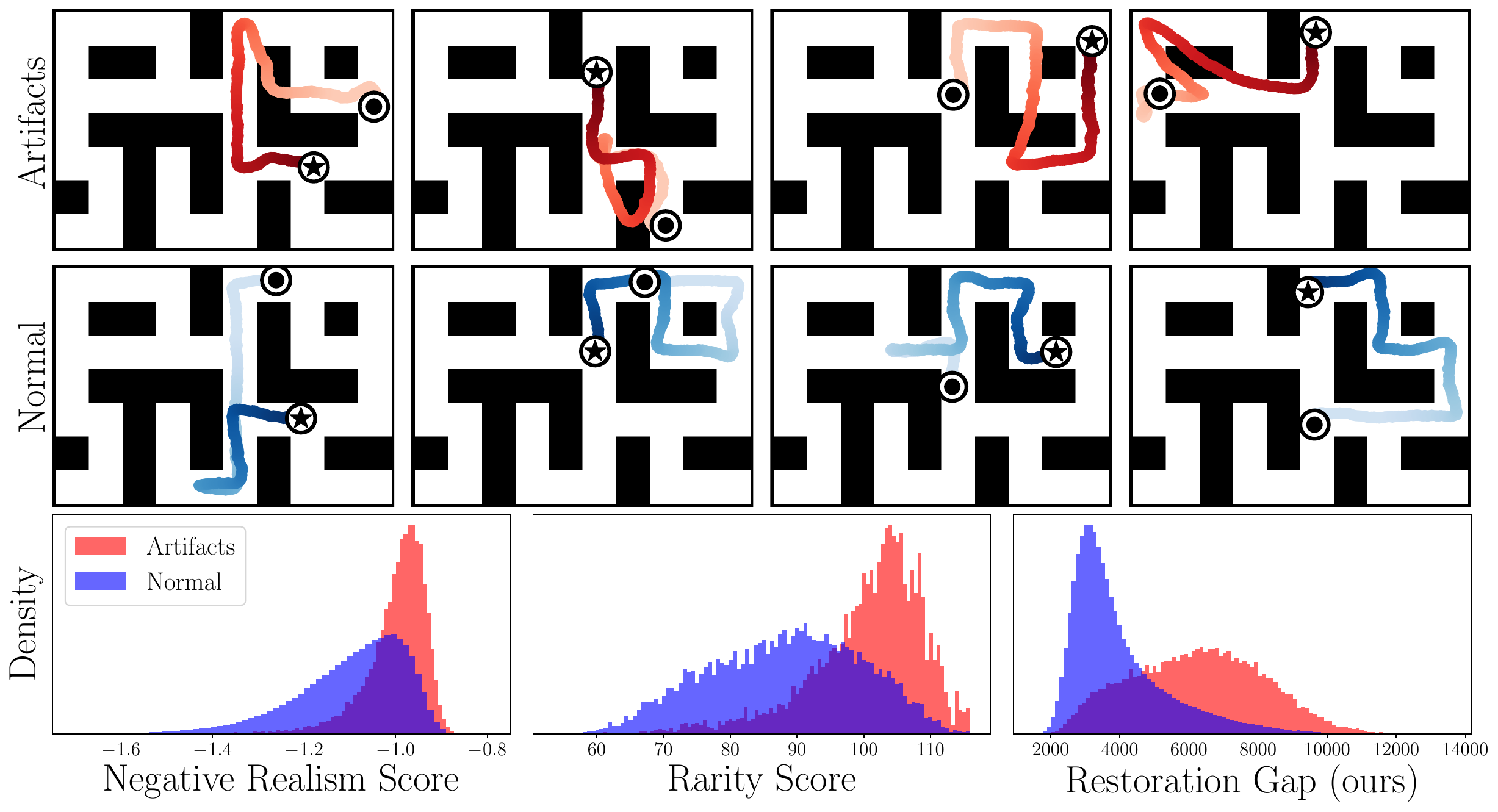}
    \caption{The first and second rows show examples of artifact and normal plans, respectively, generated by Diffuser \cite{janner2022planning} in the Maze2D-Large environment, including a predetermined start \textcircled{\raisebox{-.03em}{\resizebox{.6em}{!}{\faicon{circle}}}} and goal \textcircled{\raisebox{-.03em}{\resizebox{.7em}{!}{\faicon{star}}}}. The third row presents the density of realism score \cite{kynkaanniemi2019improved}, rarity score \cite{han2022rarity}, and restoration gap to illustrate the differences in distribution between artifacts and normal plans. Detailed explanation of other metrics is described in Appendix \ref{other_metrics}.}
    \label{fig:histogram}
\end{figure*}

To assess the quality of plans generated by diffusion probabilistic models, we propose a novel metric named \textit{restoration gap}. It aims to automatically detect infeasible plans that violate system constraints. We hypothesize that for feasible plans, even if a certain amount of noise perturbs them, they can be closely restored to their initial plans by diffusion models. It is attributed to the property of temporal compositionality in diffusion planners \cite{janner2022planning} that encourages them to compose feasible trajectories by stitching together any feasible plan subsequences. However, for infeasible plans that obviously fall out of the training distribution as they violate physical constraints as shown in Figure \ref{fig:attribution_maps}, restoring them to a state near their original conditions is challenging. Based on this intuition, we define the restoration gap of the generated plan $\boldsymbol{\tau}$ as follows:
\begin{align}
& \text{perturb}_{\hat{t}}(\boldsymbol{\tau})= \boldsymbol{\tau} +\sigma_{\hat{t}}\boldsymbol{\epsilon}_{\hat{t}}, \;\;\; \boldsymbol{\epsilon}_{\hat{t}} \sim \mathcal{N}(\mathbf{0}, \mathbf{I}) \label{eq:perturb} \\
& \text{restore}_{{\hat{t}}, \theta}(\boldsymbol{\tau}) = \boldsymbol{\tau} + \int_{\hat{t}}^0 \left[-\frac{\diff[\sigma^2_t]}{\diff t}\mathbf{s}_{\theta}(\boldsymbol{\tau},t)\right] \diff t + \sqrt{\frac{\diff[\sigma^2_t]}{\diff t}} \diff \mathbf{\bar{w}} \label{eq:restore} \\
& \text{restoration gap}_{{\hat{t}}, \theta}(\boldsymbol{\tau}) = \mathbb{E}_{\boldsymbol{\epsilon}_{\hat{t}}} \left[ \|\boldsymbol{\tau} - \text{restore}_{{\hat{t}},\theta}(\text{perturb}_{\hat{t}}(\boldsymbol{\tau}))\|_2 \right], \label{eq:restoration gap}
\end{align}
where ${\hat{t}} \in (0, 1]$ indicates the magnitude of applied perturbation. The restoration gap measures the expected $L_2$ distance between the generated plan and the plan restored from noise corruption, which is estimated by the Monte Carlo approximation.


Figure \ref{fig:histogram} provides empirical evidence supporting our hypothesis.
In order to analyze the effectiveness of the restoration gap, we define artifact plans generated by Diffuser \cite{janner2022planning} that involve transitions of passing through walls for which it is impossible for the agent to follow. We compare the distribution of the restoration gap for both groups, normal plans and artifact plans\footnote{The purpose of defining artifacts in this manner is solely to validate our hypothesis. Artifacts are not explicitly defined beyond the scope of this validation.}.
The histogram of the restoration gap for normal and artifact plans demonstrates that infeasible artifact plans have larger restoration gap values compared to normal plans.
Therefore, the detection of infeasible artifact plans can be automated by incorporating a statistical test that utilizes the restoration gap and thresholding with a threshold value of $b > 0$:
\begin{align}
\label{eq:test_statistic}
    \text{restoration gap}_{{\hat{t}}, \theta}(\boldsymbol{\tau}) > b.
\end{align}
To bound the probability of making errors by choosing the specific threshold $b$, we provide Proposition \ref{prop_1}. Let $\mathbb{H}_0$ represent the null hypothesis which assumes that the trajectory $\boldsymbol{\tau}$ belongs to the normal set $\mathcal{T}_{\mathrm{normal}}$, and let $\mathbb{H}_1$ represent the alternative hypothesis which assumes that the trajectory $\boldsymbol{\tau}$ belongs to the artifact set $\mathcal{T}_{\mathrm{artifacts}}$. The following proposition suggests how to choose the threshold $b$ in order to bound the error probabilities.

\begin{restatable}{proposition}{Firstprop}\label{prop_1}
Given $t \in [0, 1]$ and a positive constant $C, \Delta$, assume that $\|\mathbf{s}_{\theta}(\boldsymbol{\tau},t)\|_2^2 \le C^2$ for all $\boldsymbol{\tau} \in \mathcal{T}_{\mathrm{normal}}\subset \mathbb{R}^d$, and $\|\mathbf{s}_{\theta}(\boldsymbol{\tau},t)\|_2^2 \ge (C + \Delta)^2$ for all $\boldsymbol{\tau} \in \mathcal{T}_{\mathrm{artifacts}}\subset \mathbb{R}^d$. If
\begin{align}
\Delta \ge \frac{2\sqrt{d} + 2\sqrt{d + 2 \sqrt{-d \cdot \log \delta} - 2 \log \delta}}{\sigma_{\hat{t}}},
\end{align}
then setting 
\begin{align}
b \ge \sigma_{\hat{t}}\left(C\sigma_{\hat{t}} + \sqrt{d} + \sqrt{d + 2 \sqrt{-d \cdot \log \delta} - 2 \log \delta}\right)
\end{align}
guarantees both type \rom{1} and type \rom{2} errors at most $2\delta$.
\end{restatable}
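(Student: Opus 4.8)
The plan is to make the restoration-gap intuition quantitative. Composing \emph{perturb} with \emph{restore} returns the input trajectory up to three terms: the injected Gaussian perturbation $-\sigma_{\hat t}\boldsymbol{\epsilon}_{\hat t}$, the deterministic drift of the reverse SDE whose size is governed by $\|\mathbf{s}_\theta\|_2$, and the reverse-time diffusion (Wiener) increment. The whole separation between $\mathcal{T}_{\mathrm{normal}}$ and $\mathcal{T}_{\mathrm{artifacts}}$ will come from the gap between the score-norm bounds $C$ and $C+\Delta$ in the drift term, while the two random terms are tamed by a chi-squared tail bound.

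\paragraph{Main steps.}
First I would substitute \eqref{eq:perturb} into \eqref{eq:restore} and write
\[
\boldsymbol{\tau} - \mathrm{restore}_{\hat t,\theta}\bigl(\mathrm{perturb}_{\hat t}(\boldsymbol{\tau})\bigr)
= -\,\sigma_{\hat t}\boldsymbol{\epsilon}_{\hat t} \;-\; \mathbf{D}_\theta(\boldsymbol{\tau}) \;-\; \mathbf{W},
\]
where $\mathbf{D}_\theta(\boldsymbol{\tau}) = \int_{\hat t}^{0}\!\bigl(-\tfrac{\diff[\sigma^2_t]}{\diff t}\bigr)\mathbf{s}_\theta(\cdot,t)\,\diff t$ is the drift and $\mathbf{W} = \int_{\hat t}^{0}\!\sqrt{\tfrac{\diff[\sigma^2_t]}{\diff t}}\,\diff\bar{\mathbf{w}}$ is a centered Gaussian vector with covariance $\sigma_{\hat t}^{2}\mathbf{I}$ (using $\sigma_0=0$), independent of $\boldsymbol{\epsilon}_{\hat t}$. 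A change of variables $u=\sigma_t^2$ rewrites the drift as $\int_{0}^{\sigma_{\hat t}^2}\mathbf{s}_\theta(\cdot)\,\diff u$, so the score-norm hypothesis gives $\|\mathbf{D}_\theta(\boldsymbol{\tau})\|_2\le C\sigma_{\hat t}^2$ on $\mathcal{T}_{\mathrm{normal}}$ and $\|\mathbf{D}_\theta(\boldsymbol{\tau})\|_2\ge (C+\Delta)\sigma_{\hat t}^2$ on $\mathcal{T}_{\mathrm{artifacts}}$. Next I would control the random parts: $\|\boldsymbol{\epsilon}_{\hat t}\|_2^2\sim\chi^2_d$ and $\sigma_{\hat t}^{-2}\|\mathbf{W}\|_2^2\sim\chi^2_d$, and the Laurent--Massart inequality with deviation parameter $-\log\delta$ gives $\Pr[\chi^2_d\ge d+2\sqrt{-d\log\delta}-2\log\delta]\le\delta$, hence $\|\mathbf{W}\|_2\le\sigma_{\hat t}\sqrt{d+2\sqrt{-d\log\delta}-2\log\delta}$ with probability at least $1-\delta$, while $\mathbb{E}\,\|\boldsymbol{\epsilon}_{\hat t}\|_2\le\sqrt d$ by Jensen (the perturbation noise is integrated out in the definition of the restoration gap). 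For the type~\rom{1} bound, on $\mathcal{T}_{\mathrm{normal}}$ the triangle inequality plus these estimates give, off an event of probability at most $2\delta$, $\mathrm{restoration\ gap}_{\hat t,\theta}(\boldsymbol{\tau})\le C\sigma_{\hat t}^2+\sigma_{\hat t}\sqrt d+\sigma_{\hat t}\sqrt{d+2\sqrt{-d\log\delta}-2\log\delta}=b$, so the test \eqref{eq:test_statistic} triggers with probability at most $2\delta$. For the type~\rom{2} bound, on $\mathcal{T}_{\mathrm{artifacts}}$ the reverse triangle inequality gives $\mathrm{restoration\ gap}_{\hat t,\theta}(\boldsymbol{\tau})\ge (C+\Delta)\sigma_{\hat t}^2-\sigma_{\hat t}\sqrt d-\sigma_{\hat t}\sqrt{d+2\sqrt{-d\log\delta}-2\log\delta}$ with probability at least $1-2\delta$; the hypothesis on $\Delta$ is exactly the statement that this lower bound is $\ge b$, so the test fails to trigger with probability at most $2\delta$.

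\paragraph{Main obstacle.}
The delicate point is the step that bounds $\|\mathbf{D}_\theta(\boldsymbol{\tau})\|_2$ from the \emph{pointwise} assumptions on $\|\mathbf{s}_\theta\|_2$: the restoration path produced by the reverse SDE need not remain in $\mathcal{T}_{\mathrm{normal}}$ (resp.\ $\mathcal{T}_{\mathrm{artifacts}}$), and --- more seriously for the type~\rom{2} direction --- a \emph{lower} bound on $\|\int\mathbf{s}_\theta\|_2$ can be destroyed if the score rotates along the path. The argument is clean if \emph{restore} is read as a single reverse step, so that $\mathbf{s}_\theta$ is evaluated once, at $\mathrm{perturb}_{\hat t}(\boldsymbol{\tau})$, with no cancellation, or under a mild uniformity assumption on $\mathbf{s}_\theta$ along the short ($\hat t$ small) restoration path; I would either adopt that reading or add a drift-confinement lemma showing the path cannot leave the relevant set, and this is the part that needs the most care.
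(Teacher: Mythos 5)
Your proposal follows essentially the same route as the paper's proof: the identical three-term decomposition of $\boldsymbol{\tau} - \text{restore}_{\hat t,\theta}(\text{perturb}_{\hat t}(\boldsymbol{\tau}))$ into the injected noise, the score-driven drift, and the reverse-time Wiener increment; Jensen for $\mathbb{E}\|\boldsymbol{\epsilon}_{\hat t}\|_2 \le \sqrt{d}$; the Laurent--Massart $\chi^2_d$ tail for the Wiener term; triangle and reverse-triangle inequalities for the type \rom{1} and type \rom{2} directions; and the observation that the hypothesis on $\Delta$ is exactly $b_{\rom{1}} \le b_{\rom{2}}$. The ``main obstacle'' you flag is real and is \emph{not} resolved in the paper's proof either: the paper simply writes $\bigl\|\int_{\hat t}^0 \bigl[-\tfrac{\diff[\sigma_t^2]}{\diff t}\bigr]\mathbf{s}_\theta(\boldsymbol{\tau},t)\,\diff t\bigr\|_2 \le C\sigma_{\hat t}^2$ (resp.\ $\ge (C+\Delta)\sigma_{\hat t}^2$) as if the score were evaluated at the fixed input $\boldsymbol{\tau}$ rather than along the stochastic restoration path, so both the set-confinement issue and the possible cancellation in the lower bound for the type \rom{2} case are tacitly assumed away; your suggestion of reading the restoration as a single reverse step (or adding a uniformity assumption on $\mathbf{s}_\theta$ along the path) is precisely what would be needed to make the argument rigorous.
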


\begin{sproof}
We begin by deriving thresholds $b_{\rom{1}}$ and $b_{\rom{2}}$ to control type \rom{1} and type \rom{2} errors at most $\delta$, respectively. This is done by decomposing the restoration gap into the outcomes of the score and Gaussian noise. To ensure the control of both type \rom{1} and type \rom{2} errors, we examine the condition $b_{\rom{1}} \le b_{\rom{2}}$ and obtain the conclusion. For the complete proof, see Appendix \ref{A_Proofs}.
\end{sproof}

According to Proposition \ref{prop_1}, to achieve low error probabilities for both type \rom{1} (false positives, where normal trajectories are incorrectly classified as artifacts) and type \rom{2} (false negatives, where artifact trajectories are wrongly identified as normal) errors, it is essential to have a large enough $\sigma_{\hat{t}}$ to properly satisfy the condition, which implies having a large enough ${\hat{t}}$. In practice, we find that setting ${\hat{t}}=0.9$ works well.

\section{Refining Diffusion Planner}


\subsection{Restoration Gap Guidance}
Although Diffuser \cite{janner2022planning} has demonstrated competitive performance against previous non-diffusion-based planning methods by utilizing gradients of return $\mathcal{J_{\phi}}$ to guide trajectories during the denoising process:
\begin{align}
\label{eq:value_guidance}
\diff\boldsymbol{\tau}=[\mathbf{f}(\boldsymbol{\tau},t)-g(t)^2 \bigl(\mathbf{s}_{\theta}(\boldsymbol{\tau},t) + \alpha \nabla\mathcal{J_{\phi}}(\boldsymbol{\tau})\bigl)]\diff t + g(t) \diff \mathbf{\bar{w}},
\end{align}
it entirely relies on the ability of a generative model and assumes a perfect data score estimation.
For plans with inaccurately estimated scores, the diffusion models could generate unreliable plans that are infeasible to execute and lead to limited performance. To address this, it is essential to construct an adjusted score to refine the generative process of the diffusion planner. Therefore, we estimate the restoration gap by training a gap predictor $\mathcal{G}_{\psi}$ on synthetic diffused data generated through the diffusion process, taking full advantage of its superior generation ability with conditional guidance from gradients of return.
Parameters of the gap predictor $\psi$ are optimized by minimizing the following objective:
\begin{align}
\label{eq:objective_for_gap_predictor}
\mathcal{L}(\psi):=\mathbb{E}_{t, \boldsymbol{\tau}^0}[\|\text{restoration gap}_{{\hat{t}}, \theta}(\boldsymbol{\tau}^t)- \mathcal{G}_{\psi}(\boldsymbol{\tau}^t,t)\|^2],
\end{align}
where $t \in (0, 1]$ denotes a continuous time variable for indexing the diffusion timestep, and $\boldsymbol{\tau}^t$ is the diffused trajectory resulting from $\boldsymbol{\tau}^0$ at diffusion timestep $t$. With this gap predictor, we define the Restoration Gap Guidance (RGG) as follows:
\begin{align}
\label{eq:rgg}
\diff\boldsymbol{\tau}=[\mathbf{f}(\boldsymbol{\tau},t)-g(t)^2 \Bigl(\mathbf{s}_{\theta}(\boldsymbol{\tau},t) + \alpha \bigl( \nabla\mathcal{J_{\phi}}(\boldsymbol{\tau}) - \beta \nabla \mathcal{G}_{\psi}(\boldsymbol{\tau},t) \bigl) \Bigl)]\diff t + g(t) \diff \mathbf{\bar{w}},
\end{align}
where $\alpha$ is a positive coefficient that scales the overall guidance and $\beta$ is a positive coefficient that can be adjusted to enforce a small restoration gap for the generated trajectory.

\subsection{Attribution Map Regularization} \label{sec:attribution_map_regularization}

Although guiding the diffusion planner to minimize the restoration gap effectively refines low-quality plans (more details in Section \ref{sec:experiments}), this refining guidance could push the plan in an undesirable direction due to the estimation error of the gap predictor during the denoising process. As a result of this estimation error, guiding plans with the sub-optimal gap predictor may result in \textit{model exploitation} \cite{kurutach2018model, janner2019trust, rajeswaran2020game}, yielding sub-optimal results.

\begin{wrapfigure}{r}{7.0cm}
\vspace{-0.5cm}
\includegraphics[width=0.98\linewidth]{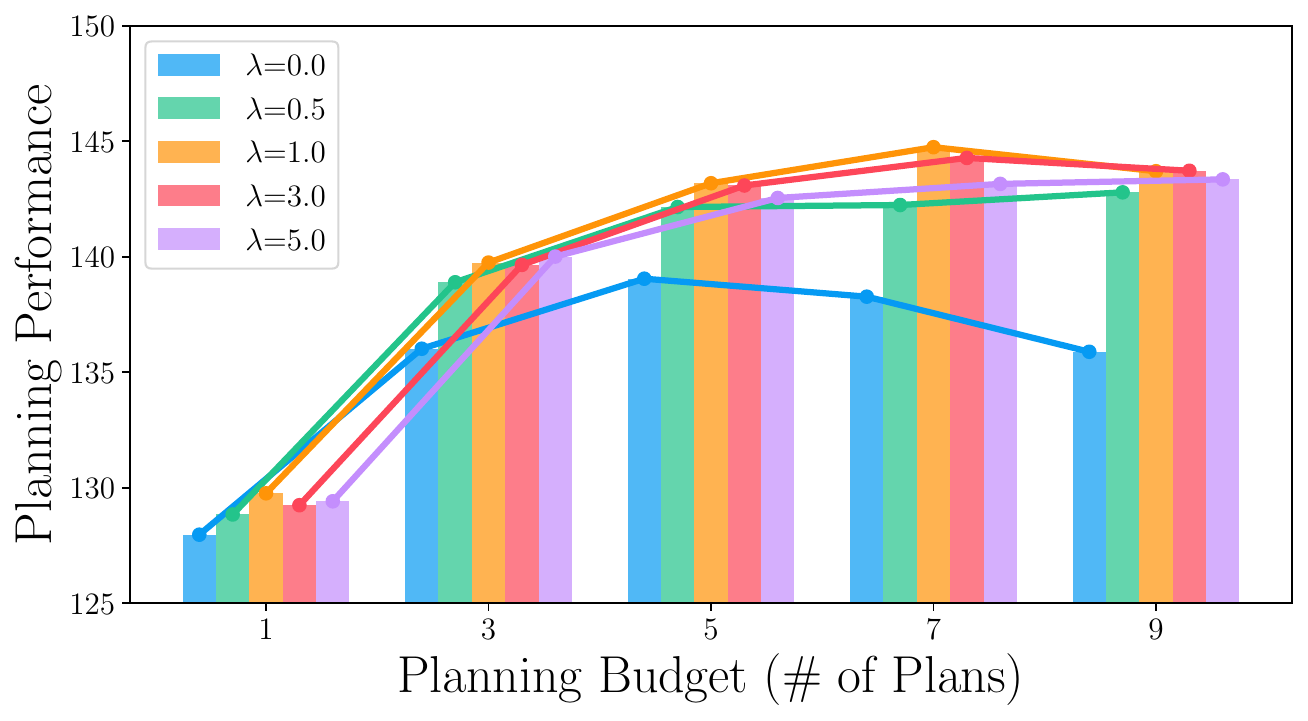}
\caption{Planning performance of RGG+ on Maze2D-Large single-task with varying $\lambda$ values.}
\label{fig:planning_budget}
\vspace{-0.3cm}
\end{wrapfigure} 

To mitigate the issue of adversarial guidance, we present a regularization method that prevents the gap predictor from directing plans in the wrong direction. Inspired by the prior studies which improve the model performance by utilizing attribution maps \cite{nagisetty2020xai, bertoin2022look}, we measure a total variation of the attribution map $M$ obtained from any input attribution methods $M=E(\mathcal{G}_{\psi}(\boldsymbol{\tau},t))$. Each element of the attribution map indicates the extent to which the final prediction is influenced by the corresponding input feature. The rationale of employing the total variation of $M$ lies in the hypothesis that transitions with excessively high attribution scores are more likely to be outliers. This is because a sequence of transitions within a planned trajectory, rather than a single one, causes a plan to have a high restoration gap. By adding this attribution map regularization, Equation \ref{eq:rgg} becomes:
\begin{align}
\label{eq:rgg+}
\diff\boldsymbol{\tau}=[\mathbf{f}(\boldsymbol{\tau},t)-g(t)^2 \Bigl(\mathbf{s}_{\theta}(\boldsymbol{\tau},t) + \alpha \bigl( \nabla\mathcal{J_{\phi}}(\boldsymbol{\tau}) - \beta \nabla \mathcal{G}_{\psi}(\boldsymbol{\tau},t) - \lambda \nabla \|\nabla M\| \bigl) \Bigl)]\diff t + g(t) \diff \mathbf{\bar{w}},
\end{align}
where $\lambda$ is a control parameter given by a positive constant, encouraging the attribution map to have a simple, organized structure while preventing the occurrence of adversarial artifacts. We refer to this modification as RGG+.

\section{Experiments}
\label{sec:experiments}

\begin{table}
  \caption{The performance of RGG, RGG+, and various previous algorithms, measured as normalized average return, is presented on the D4RL locomotion benchmark \cite{fu2020d4rl}. Results for RGG and RGG+ show the mean and standard error over 15 planning seeds. Detailed sources for the performance of prior methods are provided in Appendix \ref{B_Sources}.}
  \label{locomotion-table}
  \centering
  \resizebox{\textwidth}{!}{%
  \begin{tabular}{lllllllllllll}
    \toprule
    \multicolumn{1}{l}{\textbf{Dataset}} & \multicolumn{1}{l}{\textbf{Environment}} & \multicolumn{1}{l}{\textbf{BC}} & \multicolumn{1}{l}{\textbf{CQL}} & \multicolumn{1}{l}{\textbf{IQL}} & \multicolumn{1}{l}{\textbf{DT}} & \multicolumn{1}{l}{\textbf{TT}} & \multicolumn{1}{l}{\textbf{MOPO}} & \multicolumn{1}{l}{\textbf{MOReL}} & \multicolumn{1}{l}{\textbf{MBOP}} & \multicolumn{1}{l}{\textbf{Diffuser}} & \multicolumn{1}{l}{\textbf{RGG}} & \multicolumn{1}{l}{\textbf{RGG+}}\\
    \midrule
    Med-Expert & HalfCheetah & 55.2 & 91.6 & 86.7 & 86.8 & 95.0 & 63.3 & 53.3 & 105.9 & 79.8 & 90.8 $\pm$ 0.3 & 91.2 $\pm$ 0.3 \\
    Med-Expert & Hopper & 52.5 & 105.4 & 91.5 & 107.6 & 110.0 & 23.7 & 108.7 & 55.1 & 107.2 & 109.6 $\pm$ 2.3 & 109.9 $\pm$ 2.3\\
    Med-Expert & Walker2d & 107.5 & 108.8 & 109.6 & 108.1 & 101.9 & 44.6 & 95.6 & 70.2 & 108.4 & 107.8 $\pm$ 0.1 & 107.7 $\pm$ 0.2\\
    \midrule
    Medium & HalfCheetah & 42.6 & 44.0 & 47.4 & 42.6 & 46.9 & 42.3 & 42.1 & 44.6 & 44.2 & 44.0 $\pm$ 0.3 & 44.2 $\pm$ 0.3\\
    Medium & Hopper & 52.9 & 58.5 & 66.3 & 67.6 & 61.1 & 28.0 & 95.4 & 48.8 & 58.5 & 82.5 $\pm$ 4.3 & 84.9 $\pm$ 4.1\\
    Medium & Walker2d & 75.3 & 72.5 & 78.3 & 74.0 & 79.0 & 17.8 & 77.8 & 41.0 & 79.7 & 81.7 $\pm$ 0.5 & 82.0 $\pm$ 0.4\\
    \midrule
    Med-Replay & HalfCheetah & 36.6 & 45.5 & 44.2 & 36.6 & 41.9 & 53.1 & 40.2 & 42.3 & 42.2 & 41.0 $\pm$ 0.2 & 41.3 $\pm$ 0.2\\
    Med-Replay & Hopper & 18.1 & 95.0 & 94.7 & 82.7 & 91.5 & 67.5 & 93.6 & 12.4 & 96.8 & 95.2 $\pm$ 0.5 & 95.8 $\pm$ 0.5 \\
    Med-Replay & Walker2d & 26.0 & 77.2 & 73.9 & 66.6 & 82.6 & 39.0 & 49.8 & 9.7 & 61.2 & 78.3 $\pm$ 4.4 & 77.5 $\pm$ 4.7 \\
    \midrule
    \multicolumn{2}{c}{\textbf{Average}} & 51.9 & 77.6 & 77.0 & 74.7 & 78.9 & 42.1 & 72.9 & 47.8 & 75.3 & \textbf{81.2} & \textbf{81.6} \\
    \bottomrule
  \end{tabular}
  }
\end{table}

We present the analytical results of approaches to improve planning performance by leveraging guidance from our proposed metric, the restoration gap, for a wide range of decision-making tasks in offline control settings. Specifically, we demonstrate \textbf{(1)} the relationship between a high restoration gap and poor planning performance, \textbf{(2)} the enhancement of planning performance in the diffusion planner by leveraging restoration gap guidance, and \textbf{(3)} explainability by presenting the attribution maps of the learned gap predictor, highlighting infeasible transitions. More information about our experimental setup and implementation details can be found in Appendix \ref{C_Setup} and Appendix \ref{D_Details}, respectively.

\subsection{Relationship between Restoration Gap and Planning Performance}

\begin{wrapfigure}{r}{6.0cm}
\vspace{-0.52cm}
\includegraphics[width=0.98\linewidth]{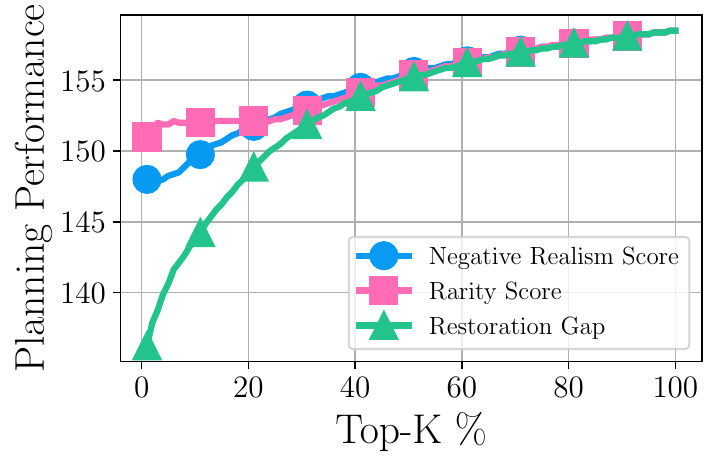}
\vspace{-0.15cm}
\caption{Performance of plans chosen from the top-K $\%$ considering various metrics, in the Maze2D-Large single-task. }
\label{fig:top_k}
\vspace{-0.3cm}
\end{wrapfigure} 

We evaluate how effectively the restoration gap can identify infeasible plans by comparing our metric with a realism score \cite{kynkaanniemi2019improved} and rarity score \cite{han2022rarity}. Both prior metrics are designed to assess the quality of generated samples by examining the discrepancy between the generated sample and the real data manifold in the feature space. Figure \ref{fig:top_k} illustrates the performance of plans which are chosen up to top-K\% from each metric. As illustrated in Figure \ref{fig:top_k}, the higher the restoration gap of the plan is, the poorer the performance is, which implies that the restoration gap captures the quality of the plan well compared to other metrics.

\subsection{Planning Performance Enhancement}

\paragraph{Maze2D Experiments}

\begin{wraptable}{r}{8.25cm}
  \vspace{-0.45cm}
  \caption{Diffuser with RGG and Diffuser with RGG+ outperform all baselines. We report the mean and the standard error over 1000 planning seeds.}
  \vspace{-0.1cm}
  \label{maze2d-table}
  \centering
  \resizebox{.59\textwidth}{!}{%
  \begin{tabular}{llrrrrrr}
    \toprule
    \multicolumn{2}{c}{\textbf{Environment}} & \multicolumn{1}{c}{\textbf{CQL}} & \multicolumn{1}{c}{\textbf{MPPI}} & \multicolumn{1}{c}{\textbf{IQL}} & \multicolumn{1}{c}{\textbf{Diffuser}} & \multicolumn{1}{c}{\textbf{RGG}} & \multicolumn{1}{c}{\textbf{RGG+}} \\
    \midrule
    Maze2D & U-Maze & 5.7  & 33.2 & 47.4 & 108.6 $\pm$ 1.4 & 108.8 $\pm$ 1.4 & 109.5 $\pm$ 1.3 \\
    Maze2D & Medium & 5.0  & 10.2 & 34.9 & 129.8 $\pm$ 0.7 & 131.8 $\pm$ 0.5 & 132.1 $\pm$ 0.4 \\
    Maze2D & Large  & 12.5 & 5.1  & 58.6 & 123.5 $\pm$ 2.0 & 135.4 $\pm$ 1.7 & 143.9 $\pm$ 1.5 \\
    \midrule
    \multicolumn{2}{c}{\textbf{Single-task Average}} & 7.7 & 16.2 & 47.0 & \multicolumn{1}{l}{120.6} & \multicolumn{1}{l}{\textbf{125.3}} & \multicolumn{1}{l}{\textbf{128.5}} \\
    \midrule
    \midrule
    Multi2D & U-Maze & - & 41.2 & 24.8 & 127.9 $\pm$ 0.8 & 128.3 $\pm$ 0.8 & 128.3 $\pm$ 0.8 \\
    Multi2D & Medium & - & 15.4 & 12.1 & 130.1 $\pm$ 0.9 & 130.0 $\pm$ 0.9 & 130.0 $\pm$ 0.9 \\
    Multi2D & Large  & - & 8.0  & 13.9 & 141.2 $\pm$ 1.6 & 148.3 $\pm$ 1.4 & 150.9 $\pm$ 1.3 \\
    \midrule
    \multicolumn{2}{c}{\textbf{Multi-task Average}} & - & 21.5 & 16.9 & \multicolumn{1}{l}{133.1} & \multicolumn{1}{l}{\textbf{135.5}} & \multicolumn{1}{l}{\textbf{136.4}} \\
    \bottomrule
  \end{tabular}
  }
  \vspace{-0.4cm}
\end{wraptable}



Maze2D environments \cite{fu2020d4rl} involve a navigation task that requires an agent to exhibit long-horizon planning abilities to reach a target goal location. Maze2D environments consist of two tasks: a single-task where the goal location is fixed, and a multi-task which we refer to as Multi2D where the goal location is randomized at the beginning of every episode. We compare our methods with the model-free offline reinforcement learning algorithms CQL \cite{kumar2020conservative} and IQL \cite{kostrikov2021offline}; conventional trajectory optimizer MPPI \cite{williams2015model}; and sequence modeling approach Diffuser \cite{janner2022planning}. As shown in Table \ref{maze2d-table}, RGG improves the planning performance of Diffuser in 5 out of 6 tasks, with notable improvements in the Maze2D-Large environments where the complexity of the obstacle maps is higher than in U-Maze or Medium layouts, leading to a higher occurrence of infeasible plans. RGG+ performs on par with or better than RGG. In contrast, model-free algorithms fail to reliably achieve the goal, as Maze2D environments require hundreds of steps to arrive at the goal location.

\vspace{-0.1cm}
\paragraph{Locomotion Experiments}

Gym-MuJoCo locomotion tasks \cite{fu2020d4rl} are standard benchmarks in evaluating algorithms on heterogeneous data with varying quality. We compare our methods with the model-free algorithms CQL \cite{kumar2020conservative} and IQL \cite{kostrikov2021offline}; model-based algorithms MOPO \cite{yu2020mopo}, MOReL \cite{kidambi2020morel}, and MBOP \cite{argenson2020model}; sequence modeling approach Decision Transformer (DT) \cite{chen2021decision}, Trajectory Transformer (TT) \cite{janner2021offline} and Diffuser \cite{janner2022planning}; and pure imitation-based approach behavior-cloning (BC). As indicated in Table \ref{locomotion-table}, our approach of refining Diffuser with RGG either matches or surpasses most of the offline RL baselines when considering the average score across various tasks. Additionally, it significantly enhances the performance of Diffuser, particularly in the "Medium" dataset. We attribute this improvement to the sub-optimal and exploratory nature of the policy that was used to generate the "Medium" dataset, which results in a challenging data distribution to learn the diffusion planner. Consequently, RGG clearly contributes to the enhancement of planning performance. However, RGG+ only brings about a marginal improvement over RGG. This might be because we adopt the strategy of \cite{janner2022planning}, using a closed-loop controller and a shorter planning horizon in locomotion environments compared to Maze2D environments, thereby simplifying the learning process of the gap predictor.

\vspace{-0.1cm}
\paragraph{Block Stacking Experiments}

\begin{wraptable}{r}{7.5cm}
  \vspace{-0.45cm}
  \caption{The performance of RGG, RGG+, and various prior methods evaluated over 100 planning seeds. A score of 100 is desired, while a random approach would receive a score of 0.}
  \vspace{-0.1cm}
  \label{kuka-table}
  \centering
  \resizebox{.53\textwidth}{!}{%
  \begin{tabular}{lrrrrrr}
    \toprule
    \multicolumn{1}{c}{\textbf{Environment}} & \multicolumn{1}{c}{\textbf{BCQ}} & \multicolumn{1}{c}{\textbf{CQL}} & \multicolumn{1}{c}{\textbf{Diffuser}} & \multicolumn{1}{c}{\textbf{RGG}} & \multicolumn{1}{c}{\textbf{RGG+}} \\
    \midrule
    Unconditional Stacking  & 0.0  & 24.4  & 53.3 $\pm$ 2.4 & 63.3 $\pm$ 2.7 & 65.3 $\pm$ 2.0 \\
    Conditional Stacking  & 0.0  & 0.0  & 44.3 $\pm$ 3.2 & 53.0 $\pm$ 3.3 & 56.7 $\pm$ 3.1 \\
    \midrule
    \multicolumn{1}{c}{\textbf{Average}} & 0.0 & 8.1 & \multicolumn{1}{l}{48.8} & \multicolumn{1}{l}{\textbf{58.2}} & \multicolumn{1}{l}{\textbf{61.0}} \\
    \bottomrule
  \end{tabular}
  }
  \vspace{-0.4cm}
\end{wraptable} 

The block stacking task suite with a Kuka iiwa robotic arm is a benchmark to evaluate the model performance for a large state space \cite{janner2022planning} where the offline demonstration data is achieved by PDDLStream \cite{garrett2020pddlstream}. It involves two tasks: an unconditional stacking task whose goal is to maximize the height of a block tower, and a conditional stacking task whose goal is to stack towers of blocks subject to a specified order of blocks. We compare our methods with model-free offline reinforcement learning algorithms BCQ \cite{fujimoto2019off} and CQL \cite{kumar2020conservative}, and Diffuser \cite{janner2022planning}. We present quantitative results in Table \ref{kuka-table}, where a score of 100 corresponds to the successful completion of the task. 
The results demonstrate the superior performance of RGG over all baselines, with RGG+ further enhancing this planning performance.

\subsection{Injecting Explainability to Diffusion Planners}

\begin{figure*}
    \centering
    \includegraphics[width=\linewidth]{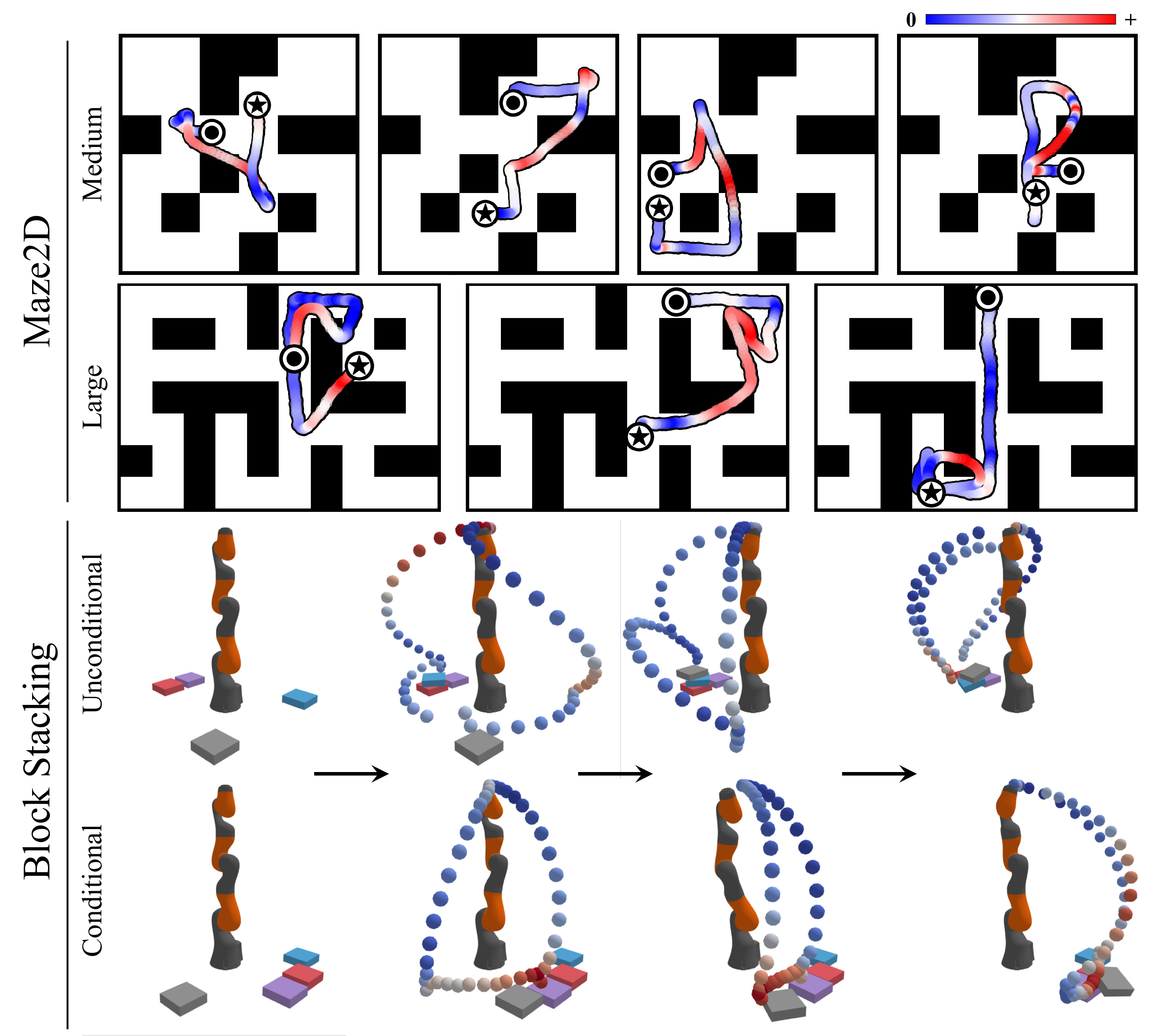}
    \caption{Attribution maps for trajectories generated by diffusion planner highlight transitions that have a substantial contribution to the estimation of a high restoration gap by the gap predictor, indicated in red.}
    \label{fig:attribution_maps}
\end{figure*}


The explainability of decision-making models is particularly important in control domains as they could potentially harm physical objects including humans \cite{kim2021explaining, lee2023adaptive, beechey2023explaining, kim2023variational, kenny2023towards}.
Training the gap predictor enables the diffusion planner to have explainability. Diffusion planners often generate trajectories with unreliable transitions resulting in execution failures. Attribution maps from the gap predictor highlight such unreliable transitions by identifying the extent to which each transition contributes to the decision of the gap predictor. Specifically, in Maze2D, the attribution maps emphasize the transitions involving wall-crossing or abrupt directional changes, as illustrated in Figure \ref{fig:attribution_maps}. In the unconditional block stacking task where the robot destroys the tower while stacking the last block, the tower-breaking transitions are highlighted. On the other hand, for successful trajectories on the second and third attribution maps, the attribution maps do not emphasize picking or stacking behaviors. Similarly, in the conditional block stacking task where the robot fails to stack the block, they spotlight the transitions of stacking behaviors.

\subsection{Additional Experiments}

To study the benefit of regularization on harder tasks, characterized by a larger trajectory space and a smaller fraction of the space observed in training, we explore $\lambda$ values $[0.0, 0.5, 1.0, 3.0, 5.0]$ while increasing the planning budget as illustrated in Figure \ref{fig:planning_budget}. As the planning budget increases, $\lambda=0$ generates adversarial plans, resulting in decreased performance. In contrast, RGG+ demonstrates effectiveness across a wide range of $\lambda$ values, with $\lambda > 0$ consistently outperforming $\lambda=0$ (i.e., better than no regularization). Further investigation into the attribution method, perturbation magnitude, and comparison with guidance approaches, including metrics such as the rarity score, the negative realism score, and the discriminator, as well as the visualization of low and high restoration gap plans, can be found in Appendix \ref{E_Additional_Exps}.



\section{Related Work}

\paragraph{Metrics for Evaluating Generative Model} 
Inception score (IS) \cite{salimans2016improved} and Fr\'echet inception distance (FID) \cite{heusel2017gans} are commonly used as standard evaluation metrics for generative models, assessing the quality of generated samples by comparing the discrepancy between real and generated samples in the feature space. However, these metrics do not distinguish between fidelity and diversity aspects of generated samples. To address this issue, precision and recall variants \cite{sajjadi2018assessing, kynkaanniemi2019improved} are introduced to separately evaluate these properties. Subsequently, density and coverage \cite{naeem2020reliable} are proposed to overcome some of the drawbacks of precision and recall, such as vulnerability to outliers and computational inefficiency. While these metrics are helpful for evaluating the quality of a set of generated samples, they are not suitable for ranking individual samples. In contrast, realism score \cite{kynkaanniemi2019improved} and rarity score \cite{han2022rarity} offer a continuous extension of improved precision and recall, enabling the evaluation of individually generated sample quality. Despite their usefulness, these methods come with limitations as they rely on real samples for precise real manifold estimation, whereas our restoration gap does not have such a constraint.

\vspace{-0.1cm}
\paragraph{Diffusion Model in Reinforcement Learning}

Diffusion models have gained prominence as a notable class of generative models, characterizing the data generation process through iterative denoising procedure \cite{sohl2015deep, ho2020denoising}. This denoising procedure can be viewed as a way to parameterize the gradients of the data distribution \cite{song2019generative}, linking diffusion models to score matching \cite{hyvarinen2005estimation} and energy-based models (EBMs) \cite{lecun2006tutorial, du2019implicit, nijkamp2019learning, grathwohl2020learning}. Recently, diffusion models have been successfully applied to various control tasks \cite{janner2022planning, urain2022se, ajay2022conditional, chi2023diffusion, liang2023adaptdiffuser}. In particular, Diffuser \cite{janner2022planning} employs an unconditional diffusion model to generate trajectories consisting of state-action pairs. The approach includes training a separate model that predicts the cumulative rewards of noisy trajectory samples, which then guides the reverse diffusion process towards high-return trajectory samples in the inference phase, analogous to classifier-guided sampling \cite{dhariwal2021diffusion}. Building upon this, Decision Diffuser \cite{ajay2022conditional} extends the capabilities of Diffuser by adopting a conditional diffusion model with reward or constraint guidance to effectively satisfy constraints, compose skills, and maximize return. Meanwhile, AdaptDiffuser \cite{liang2023adaptdiffuser} enhances generalization ability of the diffusion model to unseen tasks by selectively fine-tuning it with high-quality data, derived through the use of hand-designed reward functions and an inverse dynamics model. In contrast, in this work, we focus on evaluating the quality of individually generated samples and explore ways to enhance planning performance by utilizing guidance derived from these evaluations.

\vspace{-0.1cm}
\paragraph{Restoring Artifacts in Generative Models}
Recently, several studies have concentrated on investigating the artifacts in Generative Adversarial Networks (GAN) model architectures for image generation tasks. GAN Dissection \cite{bau2018gan} explores the internal mechanisms of GANs, focusing on the identification and removal of units that contribute to artifact production, leading to more realistic outputs. In a subsequent study, an external classifier is trained to identify regions of low visual fidelity in individual generations and to detect internal units associated with those regions \cite{tousi2021automatic}. Alternatively, artifact correction through latent code manipulation based on a binary linear classifier is proposed \cite{shen2020interpreting}. Although these methods can assess the fidelity of individual samples, they still necessitate additional supervision, such as human annotation. To address this limitation, subsequent works explore unsupervised approaches for detecting and correcting artifact generations by examining local activation \cite{jeong2022unsupervised} and activation frequency \cite{choi2022can}. In contrast, our work primarily focuses on refining the generative process of diffusion probabilistic models to restore low-quality plans.

\section{Conclusion}


We have presented a novel refining method that fixes infeasible transitions within the trajectory generated by the diffusion planner. This refining process is guided by a proposed metric, restoration gap, which quantifies the restorability of a given plan. Under specific regularity conditions, we prove that the restoration gap effectively identifies unreliable plans while ensuring a low error probability for both type \rom{1} and type \rom{2} errors. The experimental results, which include enhancement in quantitative planning performance and visualization of qualitative attribution maps, highlight the importance of the refinement method of the diffusion planner.

\paragraph{Limitations}

While the restoration gap guidance effectively enhances the feasibility of plans and consistently improves the planning performance of diffusion models, our method is limited in situations where an offline dataset is provided. Training the diffusion model often requires transition data that uniformly covers the state-action space, the collection of which is a nontrivial and time-consuming task.

\paragraph{Future Work}

Our analysis of the effectiveness of the restoration gap is currently confined to a relatively simple task, Maze2D (see Figure \ref{fig:histogram}), where we explicitly define normal and artifact plans. The choice of Maze2D is motivated by its suitability for identifying violations of prior knowledge, such as feasible plans not passing through walls. However, as future work, it would be worthwhile to explore the efficacy of restoration gap in more complex tasks, such as the block stacking task.

\section*{Acknowledgements}

This work was supported by the Industry Core Technology Development Project, 20005062, Development of Artificial Intelligence Robot Autonomous Navigation Technology for Agile Movement in Crowded Space, funded by the Ministry of Trade, Industry \& Energy (MOTIE, Republic of Korea) and by Institute of Information \& communications Technology Planning \& Evaluation (IITP) grant funded by the Korea government (MSIT) (No. 2022-0-00984, Development of Artificial Intelligence Technology for Personalized Plug-and-Play Explanation and Verification of Explanation, No.2019-0-00075, Artificial Intelligence Graduate School Program (KAIST)).

\bibliography{references}
\bibliographystyle{neurips_2023}

\newpage
\appendix
\onecolumn
\begin{appendices}


\section{Proofs}\label{A_Proofs}

\Firstprop*
\begin{proof}
We prove the proposition by employing techniques similar to those used in Theorem 3.3 from \cite{han2019confirmatory} and Proposition 1 from \cite{meng2021sdedit}. To guarantee that both Type I and Type II errors are at most $2\delta$, we first derive thresholds $b_{\rom{1}}$ and $b_{\rom{2}}$ to control Type I and Type II errors at most $\delta$, respectively.

\textbf{Controlling Type \rom{1} Error}

For the type \rom{1} error,  we aim to bound the probability:
\begin{align}
\mathbb{P}(\text{restoration gap}_{{\hat{t}}, \theta}(\boldsymbol{\tau}) \ge b_{\rom{1}}|\mathbb{H}_0),
\end{align}
where $b_{\rom{1}}$ represents the acceptance threshold for the alternative hypothesis. The $\text{restoration gap}_{{\hat{t}}, \theta}(\boldsymbol{\tau})$ can be written as 
\begin{align}
&\text{restoration gap}_{{\hat{t}}, \theta}(\boldsymbol{\tau}) \\
&\quad= \mathbb{E}_{\epsilon_{\hat{t}}} \Biggr[ \bigg\| \boldsymbol{\tau} - \text{restore}_{{\hat{t}},\theta}(\text{perturb}_{\hat{t}}(\boldsymbol{\tau})) \bigg\|_2 \biggr] \\
&\quad= \mathbb{E}_{\epsilon_{\hat{t}}} \Biggr[ \bigg\|\boldsymbol{\tau} - (\boldsymbol{\tau} +\sigma_{\hat{t}}\epsilon_{\hat{t}} + \int_{\hat{t}}^0 \left[-\frac{\diff[\sigma^2_t]}{\diff t}\mathbf{s}_{\theta}(\boldsymbol{\tau},t)\right] \diff t + \sqrt{\frac{\diff[\sigma^2_t]}{\diff t}} \diff \mathbf{\bar{w}})\bigg\|_2 \Biggr] \\
&\quad= \mathbb{E}_{\epsilon_{\hat{t}}} \Biggr[ \bigg\|\sigma_{\hat{t}}\epsilon_{\hat{t}} + \int_{\hat{t}}^0 \left[-\frac{\diff[\sigma^2_t]}{\diff t}\mathbf{s}_{\theta}(\boldsymbol{\tau},t)\right] \diff t + \sqrt{\frac{\diff[\sigma^2_t]}{\diff t}} \diff \mathbf{\bar{w}}\bigg\|_2 \Biggr] \\
&\quad\le \sigma_{\hat{t}}\mathbb{E}_{\epsilon_{\hat{t}}} \left[ \| \epsilon_{\hat{t}} \|_2 \right] + \bigg\|\int_{\hat{t}}^0 \left[-\frac{\diff[\sigma^2_t]}{\diff t}\mathbf{s}_{\theta}(\boldsymbol{\tau},t)\right] \diff t + \sqrt{\frac{\diff[\sigma^2_t]}{\diff t}} \diff \mathbf{\bar{w}})\bigg\|_2 \\
&\quad\le \sigma_{\hat{t}}\mathbb{E}_{\epsilon_{\hat{t}}} \left[ \| \epsilon_{\hat{t}} \|_2 \right] + \bigg\|\int_{\hat{t}}^0 \left[-\frac{\diff[\sigma^2_t]}{\diff t}\mathbf{s}_{\theta}(\boldsymbol{\tau},t)\right] \diff t \bigg\|_2 + \bigg\|\int_{\hat{t}}^0 \sqrt{\frac{\diff[\sigma^2_t]}{\diff t}} \diff \mathbf{\bar{w}})\bigg\|_2 \\
&\quad\le \sqrt{d}\sigma_{\hat{t}} + C\sigma_{\hat{t}}^2 + \bigg\| \int_{\hat{t}}^0 \sqrt{\frac{\diff[\sigma^2_t]}{\diff t}} \diff \mathbf{\bar{w}})\bigg\|_2,
\end{align}
where the last inequality comes from Jensen's inequality, $\mathbb{E}_{\epsilon_{\hat{t}}} \left[ \| \epsilon_{\hat{t}} \|_2 \right] \le \sqrt{\mathbb{E}_{\epsilon_{\hat{t}}} \left[ \| \epsilon_{\hat{t}} \|_2^2 \right]}=\sqrt{d}$, and considering the assumption over $\mathbf{s}_{\theta}(\boldsymbol{\tau},t)$ under the null hypothesis. As shown in \cite{meng2021sdedit}, the last term corresponds to the $L_2$ norm of a random variable arising from a Wiener process at time $t = 0$, where its marginal distribution is given by $\epsilon \sim \mathcal{N}(\mathbf{0}, \sigma_{\hat{t}}^2\mathbf{I})$. Dividing the squared $L_2$ norm of $\epsilon$ by $\sigma^2_t$ results in a $\chi^2$-distribution with $d$ degrees of freedom. According to Lemma 1 from \cite{laurent2000adaptive}, we obtain the following one-sided tail bound:
\begin{align}
\mathbb{P}(\|\epsilon\|^2_2/\sigma_{\hat{t}}^2 \ge d + 2 \sqrt{-d \cdot \log \delta} - 2 \log \delta) \le \exp(\log\delta) = \delta.
\end{align}
Then, we have,
\begin{align}
\mathbb{P}\left(\|\epsilon\|_2/\sigma_{\hat{t}} \ge \sqrt{d + 2 \sqrt{-d \cdot \log \delta} - 2 \log \delta}\right) \le \delta.
\end{align}
Therefore, under null hypothesis, with probability of at least $(1-\delta)$, we have that:
\begin{align}
\text{restoration gap}_{{\hat{t}}, \theta}(\boldsymbol{\tau}) \le b_{\rom{1}} = \sigma_t\left(C\sigma_t + \sqrt{d} + \sqrt{d + 2 \sqrt{-d \cdot \log \delta} - 2 \log \delta}\right)
\end{align}
which guarantees a bounded type \rom{1} error at most $\delta$.

\textbf{Controlling Type \rom{2} Error}

For the type \rom{2} error,  we aim to bound the probability:
\begin{align}
\mathbb{P}(\text{restoration gap}_{{\hat{t}}, \theta}(\boldsymbol{\tau}) \le b_{\rom{2}}|\mathbb{H}_1),
\end{align}
where $b_{\rom{2}}$ represents the acceptance threshold for the alternative hypothesis. The $\text{restoration gap}_{{\hat{t}}, \theta}(\boldsymbol{\tau})$ can be written as 
\begin{align}
&\text{restoration gap}_{{\hat{t}}, \theta}(\boldsymbol{\tau}) \\
&\quad= \mathbb{E}_{\epsilon_{\hat{t}}} \Biggr[ \bigg\|\boldsymbol{\tau} - \text{restore}_{{\hat{t}},\theta}(\text{perturb}_{\hat{t}}(\boldsymbol{\tau}))\bigg\|_2 \biggr] \\
&\quad= \mathbb{E}_{\epsilon_{\hat{t}}} \Biggr[ \bigg\|\boldsymbol{\tau} - (\boldsymbol{\tau} +\sigma_{\hat{t}}\epsilon_{\hat{t}} + \int_{\hat{t}}^0 \left[-\frac{\diff[\sigma^2_t]}{\diff t}\mathbf{s}_{\theta}(\boldsymbol{\tau},t)\right] \diff t + \sqrt{\frac{\diff[\sigma^2_t]}{\diff t}} \diff \mathbf{\bar{w}})\bigg\|_2 \Biggr] \\
&\quad= \mathbb{E}_{\epsilon_{\hat{t}}} \Biggr[ \bigg\|\sigma_{\hat{t}}\epsilon_{\hat{t}} + \int_{\hat{t}}^0 \left[-\frac{\diff[\sigma^2_t]}{\diff t}\mathbf{s}_{\theta}(\boldsymbol{\tau},t)\right] \diff t + \sqrt{\frac{\diff[\sigma^2_t]}{\diff t}} \diff \mathbf{\bar{w}}\bigg\|_2 \Biggr] \\
&\quad\ge -\sigma_{\hat{t}}\mathbb{E}_{\epsilon_{\hat{t}}} \left[ \| \epsilon_{\hat{t}} \|_2 \right] + \bigg\|\int_{\hat{t}}^0 \left[-\frac{\diff[\sigma^2_t]}{\diff t}\mathbf{s}_{\theta}(\boldsymbol{\tau},t)\right] \diff t + \sqrt{\frac{\diff[\sigma^2_t]}{\diff t}} \diff \mathbf{\bar{w}})\bigg\|_2 \\
&\quad\ge -\sigma_{\hat{t}}\mathbb{E}_{\epsilon_{\hat{t}}} \left[ \| \epsilon_{\hat{t}} \|_2 \right] + \bigg\|\int_{\hat{t}}^0 \left[-\frac{\diff[\sigma^2_t]}{\diff t}\mathbf{s}_{\theta}(\boldsymbol{\tau},t)\right] \diff t \bigg\|_2 - \bigg\|\int_{\hat{t}}^0 \sqrt{\frac{\diff[\sigma^2_t]}{\diff t}} \diff \mathbf{\bar{w}})\bigg\|_2 \\
&\quad\ge -\sqrt{d}\sigma_{\hat{t}} + (C+\Delta)\sigma_{\hat{t}}^2 - \bigg\| \int_{\hat{t}}^0 \sqrt{\frac{\diff[\sigma^2_t]}{\diff t}} \diff \mathbf{\bar{w}})\bigg\|_2,
\end{align}
where we follow a similar procedure as in the proof for controlling type \rom{1} error, except deriving the lower bound by employing the triangle inequality. Therefore, under alternative hypothesis, with probability at least $(1-\delta)$, we have that:
\begin{align}
\text{restoration gap}_{{\hat{t}}, \theta}(\boldsymbol{\tau}) \ge b_{\rom{2}} = \sigma_{\hat{t}}\left((C+\Delta)\sigma_{\hat{t}} - \sqrt{d} - \sqrt{d + 2 \sqrt{-d \cdot \log \delta} - 2 \log \delta}\right)
\end{align}
which guarantee a bounded type \rom{2} error at most $\delta$.

\textbf{Combining Type \rom{1} and Type \rom{2} Thresholds}

Following Theorem 3.3 in \cite{han2019confirmatory}, when $b_{\rom{1}} \le b_{\rom{2}}$, selecting $b_{\rom{1}}$ as the threshold ensures that both type \rom{1} and type \rom{2} errors are at most $2\delta$. Therefore, under 
\begin{align}
\Delta \ge \frac{2\sqrt{d} + 2\sqrt{d + 2 \sqrt{-d \cdot \log \delta} - 2 \log \delta}}{\sigma_{\hat{t}}}
\end{align}
setting
\begin{align}
b \ge \sigma_{\hat{t}}\left(C\sigma_{\hat{t}} + \sqrt{d} + \sqrt{d + 2 \sqrt{-d \cdot \log \delta} - 2 \log \delta}\right)
\end{align}
guarantees both type \rom{1} and type \rom{2} errors at most $2\delta$, which completes the proof. 
\end{proof}

\section{Additional Experiments}\label{E_Additional_Exps}

\subsection{Sensitivity Analysis to Input Attribution Methods}

\begin{wraptable}{r}{6.00cm}
  \vspace{-0.45cm}
  \caption{Performance of RGG+ in Maze2D depending on the choice of attribution maps which suggests the robustness of the attribution map regularizer.}
  \label{ablation_attribution}
  \centering
  \resizebox{.42\textwidth}{!}{%
  \begin{tabular}{lrr}
    \toprule
    \multicolumn{1}{c}{} & \multicolumn{1}{c}{Maze2D Large} & \multicolumn{1}{c}{Multi2D Large} \\
    \midrule
    Grad-Cam & 143.9 $\pm$ 1.50 & 150.9 $\pm$ 1.25 \\
    Saliency & 142.7 $\pm$ 1.56 & 150.1 $\pm$ 1.29\\
    DeepLIFT & 143.8 $\pm$ 1.48 & 150.9 $\pm$ 1.26\\
    \bottomrule
  \end{tabular}
  }
  \vspace{-0.4cm}
\end{wraptable} 

We investigate how sensitive the proposed method is depending on the choice of input attribution methods. As described in Section \ref{sec:attribution_map_regularization}, RGG+ utilizes the input attribution method as a regularizer to prevent adversarial restoration gap guidance. We apply the input attribution method to the gap predictor, where it quantifies the impact of each transition in the generated plan on the prediction of the restoration gap. To validate the robustness of RGG+ to different input attribution methods, we compare its performance while employing three attribution methods: Grad-CAM \cite{selvaraju2017grad}, Sailency \cite{simonyan2013deep}, and DeepLIFT \cite{shrikumar2017learning}. The experimental results for Maze2D tasks are presented in Table \ref{ablation_attribution}. RGG+ exhibits comparable performances across the three attribution methods, which implies that our proposed method is robust in terms of the choice of input attribution methods.

\subsection{Sensitivity Analysis to Perturbation Magnitude}

\begin{wrapfigure}{r}{7.0cm}
    \vspace{-0.4cm}
    \includegraphics[width=0.98\linewidth]{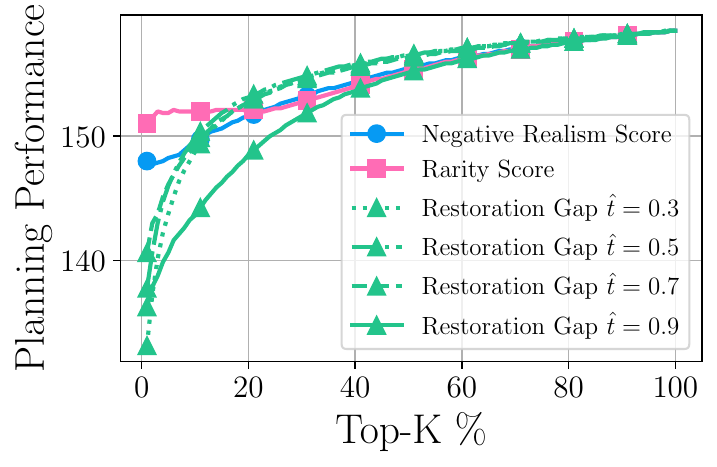}
    \vspace{-0.15cm}
    \caption{An ablation study varying $\hat{t}$ to understand the impact of the magnitude of the applied perturbation.}
    \label{fig:top_k_ablation_hat_t}
    \vspace{-0.5cm}
\end{wrapfigure} 

To further investigate the effect of choosing different $\hat{t}$ values on the restoration gap, we compare the performance of plans which are chosen up to top-K\% from the restoration gap when setting $\hat{t}$ to $0.3$, $0.5$, and $0.7$, in addition to the initial setting of $0.9$. The results of this ablation study are illustrated in Figure \ref{fig:top_k_ablation_hat_t}. The results clearly show that the restoration gap remains significantly correlated with planning performance, even with these varied $\hat{t}$ values, when considering the top $10$\% of results. Specifically, while the correlation is strongest with $\hat{t}$ set to $0.9$, even with $\hat{t}$ at $0.3$, $0.5$, or $0.7$, the restoration gap still demonstrates a stronger correlation with planning performance compared to the rarity score or the negative realism score. This indicates the robustness of the restoration gap as a metric for assessing the quality of generated plans. Moreover, these results support our Proposition \ref{prop_1} that a larger $\hat{t}$ is an effective choice when using the restoration gap.

\subsection{Comparison to Other Metrics}

\begin{table}[ht]
\centering
\vspace{-0.2cm}
\caption{Performance Comparison with rarity score and negative realism score.}
  \label{comp_to_other_metrics}
  \centering
  \begin{tabular}{llcccc}
    \toprule
    \multicolumn{2}{c}{\textbf{Environment}} & \multicolumn{1}{c}{\textbf{Rarity}} & \multicolumn{1}{c}{\textbf{Negative Realism}} & \multicolumn{1}{c}{\textbf{RGG}} & \multicolumn{1}{c}{\textbf{RGG+}} \\
    \midrule
    Maze2D & Large  & 126.9 $\pm$ 2.1 & 128.9 $\pm$ 1.6 & \textbf{135.4} $\pm$ 1.7 & \textbf{143.9} $\pm$ 1.5 \\
    Multi2D & Large & 143.4 $\pm$ 1.7 & 143.3 $\pm$ 1.5 & \textbf{148.3} $\pm$ 1.4 & \textbf{150.9} $\pm$ 1.3 \\
    \bottomrule
  \end{tabular}
\end{table}

We investigate how effectively the restoration gap evaluates the quality of generated samples by comparing the histogram with various metrics in Figure \ref{fig:histogram} and comparing the planning performance in Figure \ref{fig:top_k}. To further explore how significant a guidance signal provided by the restoration gap is for refining a diffusion planner, we compare the planning performance of plans guided by various metrics including the rarity score, the negative realism score, and the restoration gap. We conduct additional experiments in the Maze2D Large and Multi2D Large environments, the results of which are presented in Table \ref{comp_to_other_metrics}. Consistent with the observations in the previous experiments, Table \ref{comp_to_other_metrics} clearly demonstrates that the restoration gap is a useful metric for control tasks. Unlike other metrics requiring expert data for training, our restoration gap works without such constraints, making it even more practical.

\subsection{Comparison to Discriminator Guidance}

\begin{table}[ht]
\centering
\begin{minipage}{0.95\textwidth}
\vspace{-0.2cm}
\caption{Performance Comparison with discriminator guidance (DG).}
  \label{comp_to_dg}
  \centering
  \begin{tabular}{llrrrr}
    \toprule
    \multicolumn{2}{c}{\textbf{Environment}} & \multicolumn{1}{c}{\textbf{Diffuser}} & \multicolumn{1}{c}{\textbf{DG}} & \multicolumn{1}{c}{\textbf{RGG}} & \multicolumn{1}{c}{\textbf{RGG+}} \\
    \midrule
    Maze2D & Large  & 123.5 $\pm$ 2.0 & 127.0 $\pm$ 1.9 & \textbf{135.4} $\pm$ 1.7 & \textbf{143.9} $\pm$ 1.5 \\
    Multi2D & Large & 141.2 $\pm$ 1.6 & 143.6 $\pm$ 1.6 & \textbf{148.3} $\pm$ 1.4 & \textbf{150.9} $\pm$ 1.3 \\
    \bottomrule
  \end{tabular}
\end{minipage}
\vfill
\begin{minipage}{0.95\textwidth}
    \centering
    \includegraphics[width=0.98\linewidth]{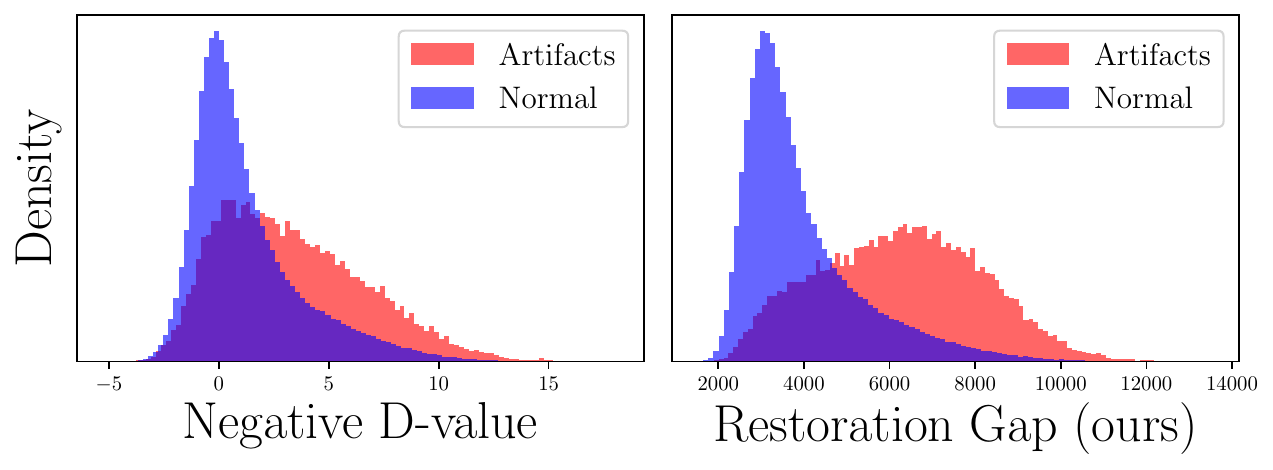}
    \captionof{figure}{The distribution differences between Artifacts and Normal plans illustrated through the density of the negative D-value and restoration gap.}
    \label{fig:hist_d-value}
\end{minipage}
\end{table}

In our work, we have demonstrated the effectiveness of utilizing the restoration gap prediction model to enhance the performance of Diffuser. However, there could be an alternative approach that is to leverage the output of the discriminator which distinguishes whether the given plan is real or generated; we refer to this discriminator output as the "D-value". It is plausible as the discriminator inherently distinguishes between real and generated trajectories.

To evaluate the ability of the discriminator to accurately identify infeasible plans, we define a subset of plans generated by Diffuser \cite{janner2022planning} as "artifact plans". These artifact plans consist of transitions that include passing through walls, an impossible action for the agent to follow. We then compare the distribution of the restoration gap for both the normal and artifact plans.

As illustrated in Figure \ref{fig:hist_d-value}, our comparison reveals that the D-value fails to distinguish between artifacts and normal groups as effectively as the restoration gap. A noteworthy observation is that the discriminator, while adept at identifying infeasible transitions within generated trajectories, tends to focus more on local transitions rather than the overall structure. This local concentration results in it being less useful than the restoration gap when it comes to recognizing infeasible plans.

To further illustrate the practical implications of these findings, we conduct an additional performance comparison experiment. Here, the discriminator guidance (DG) is used to refine the Diffuser, and the resulting performances are compared with those of RGG and RGG+ (see Table \ref{comp_to_dg}). In our comparative experiments involving Maze2D Large and Multi2D Large environments, it is apparent that the restoration gap guidance methods significantly outperform both the original Diffuser and the discriminator guidance (DG) method. While DG offers a slight improvement over Diffuser, it is unable to match the performance enhancements provided by our proposed methods.

These results underscore the effectiveness of the restoration gap as a reliable metric for improving trajectory generation by diffusion planners. This superiority holds even when compared to discriminator guidance, which directly capitalizes on the discriminator's capability to distinguish between real and generated trajectories. Consequently, it is evident that the restoration gap provides a more efficient strategy for refining diffusion planners.



\subsection{Visualization of Plans with Low and High Restoration Gap}

To understand how restoration gap values relate to the qualities of generated plans and to validate the efficacy of the restoration gap in detecting infeasible plans, we present additional qualitative results comparing visualized plans with low and high restoration gaps.

In the Maze2D-Large environment, as depicted in Figure \ref{fig:vis_maze2d_plans}, plans with low restoration gap values exhibit smoother and more coherent trajectories. In contrast, plans with high restoration gap values often include physically infeasible transitions, such as passing through walls or abrupt changes in direction.

In the HalfCheetah environment, as demonstrated in Figure \ref{fig:vis_halfcheetah_plans}, the instability of movement, particularly during landing, is more apparent in plans with a high restoration gap. On the other hand, plans with low restoration gap values allow for stable landings, enabling the cheetah to move farther.

In the Unconditional Block Stacking environment, as illustrated in Figure \ref{fig:vis_kuka_plans}, plans with a high restoration gap exhibit error-prone transitions. For example, in such plans, the robotic arm suddenly teleports from the initial joint position, magically grasps a block stacked under other blocks, or unexpectedly changes the grasped block. Moreover, these plans violate physical constraints, such as having the block in the same position as the robotic arm. In comparison, plans with low restoration gap values comply with the physical constraints.

This disparity between plans with low and high restoration gap values illustrates the effectiveness of our restoration gap metric in distinguishing between reliable and unreliable plans across various environments.

\begin{figure*}[ht]
    \centering
    \includegraphics[width=\linewidth]{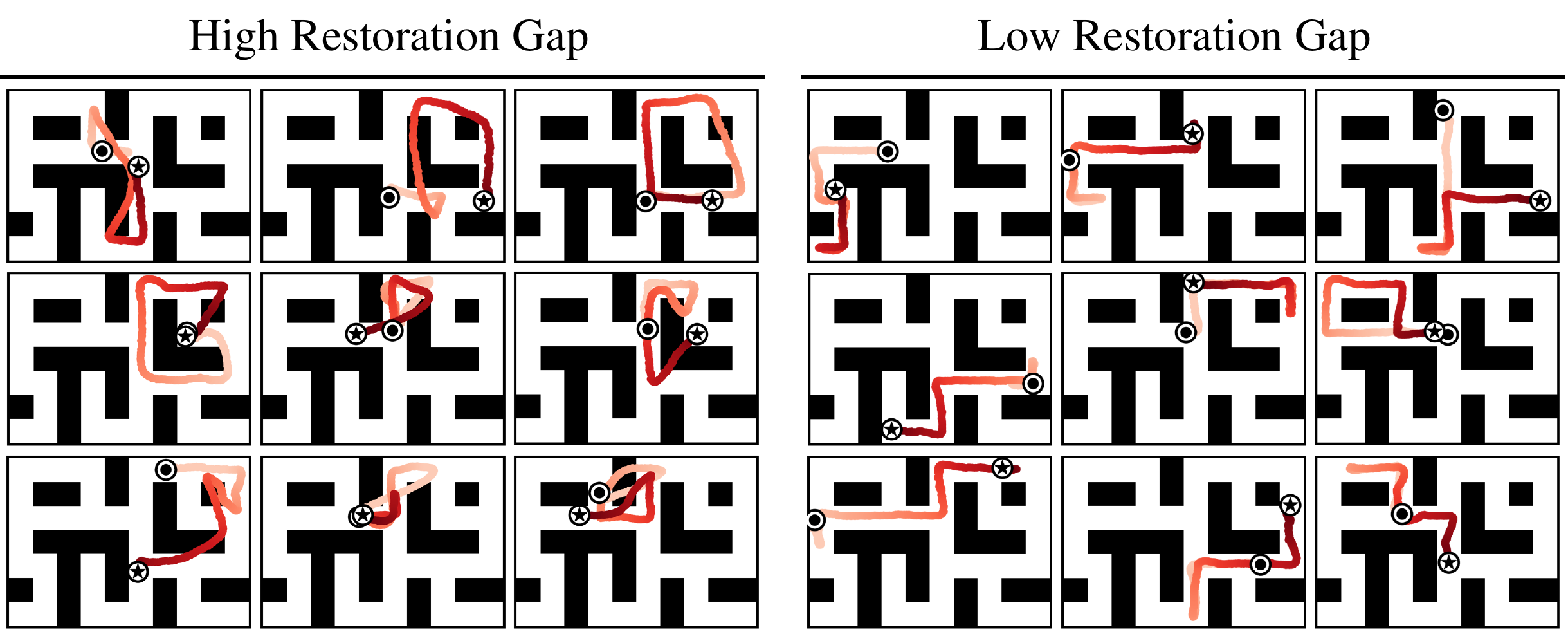}
    \caption{Visual comparison of plans of low and high restoration gap values generated by Diffuser in the Maze2D-Large environment.}
    \label{fig:vis_maze2d_plans}
\end{figure*}

\begin{figure*}[ht]
    \centering
    \includegraphics[width=\linewidth]{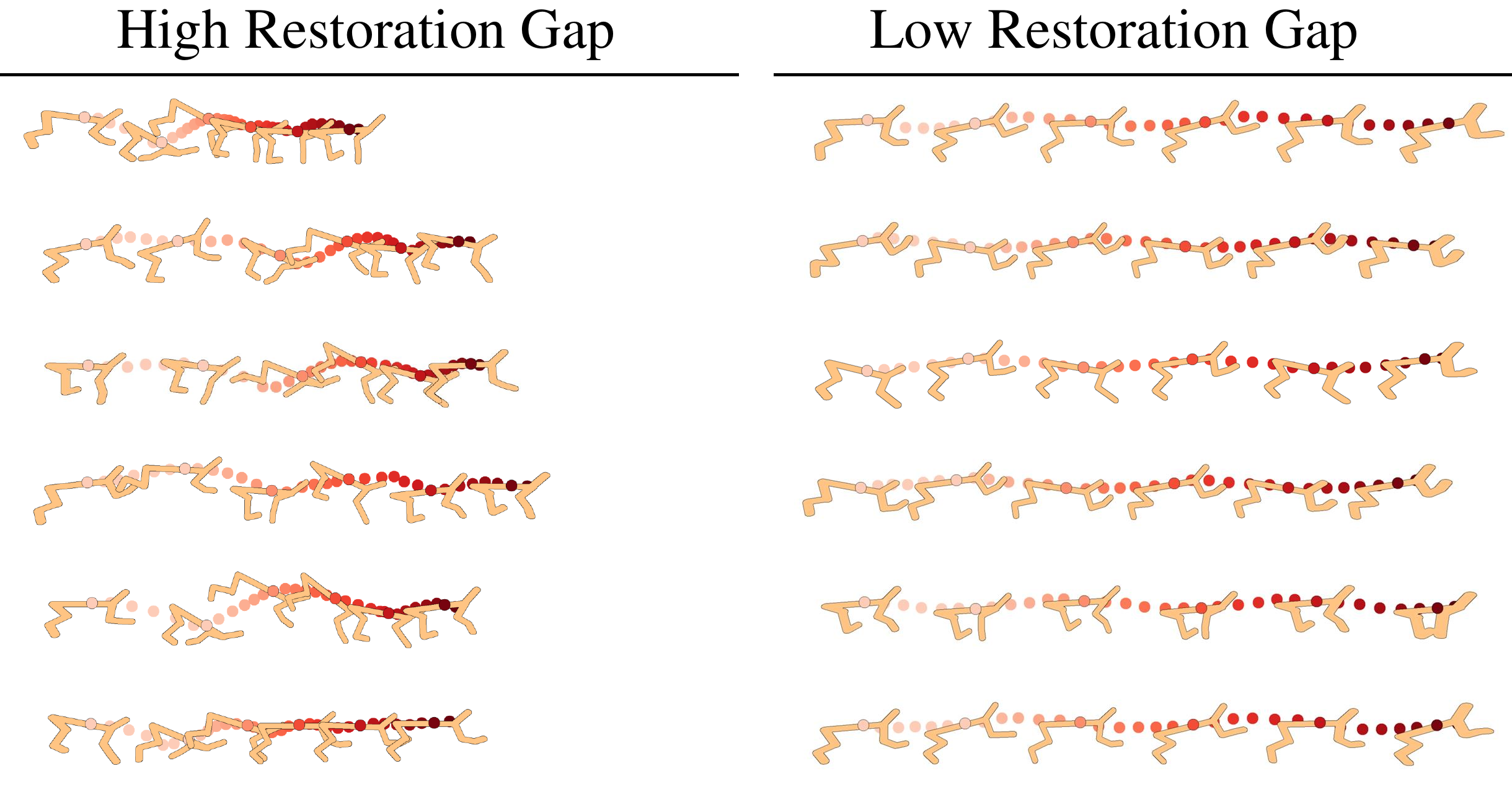}
    \caption{Visual comparison of plans of low and high restoration gap values generated by Diffuser in the HalfCheetah environment.}
    \label{fig:vis_halfcheetah_plans}
\end{figure*}

\begin{figure*}[ht]
    \centering
    \includegraphics[width=\linewidth]{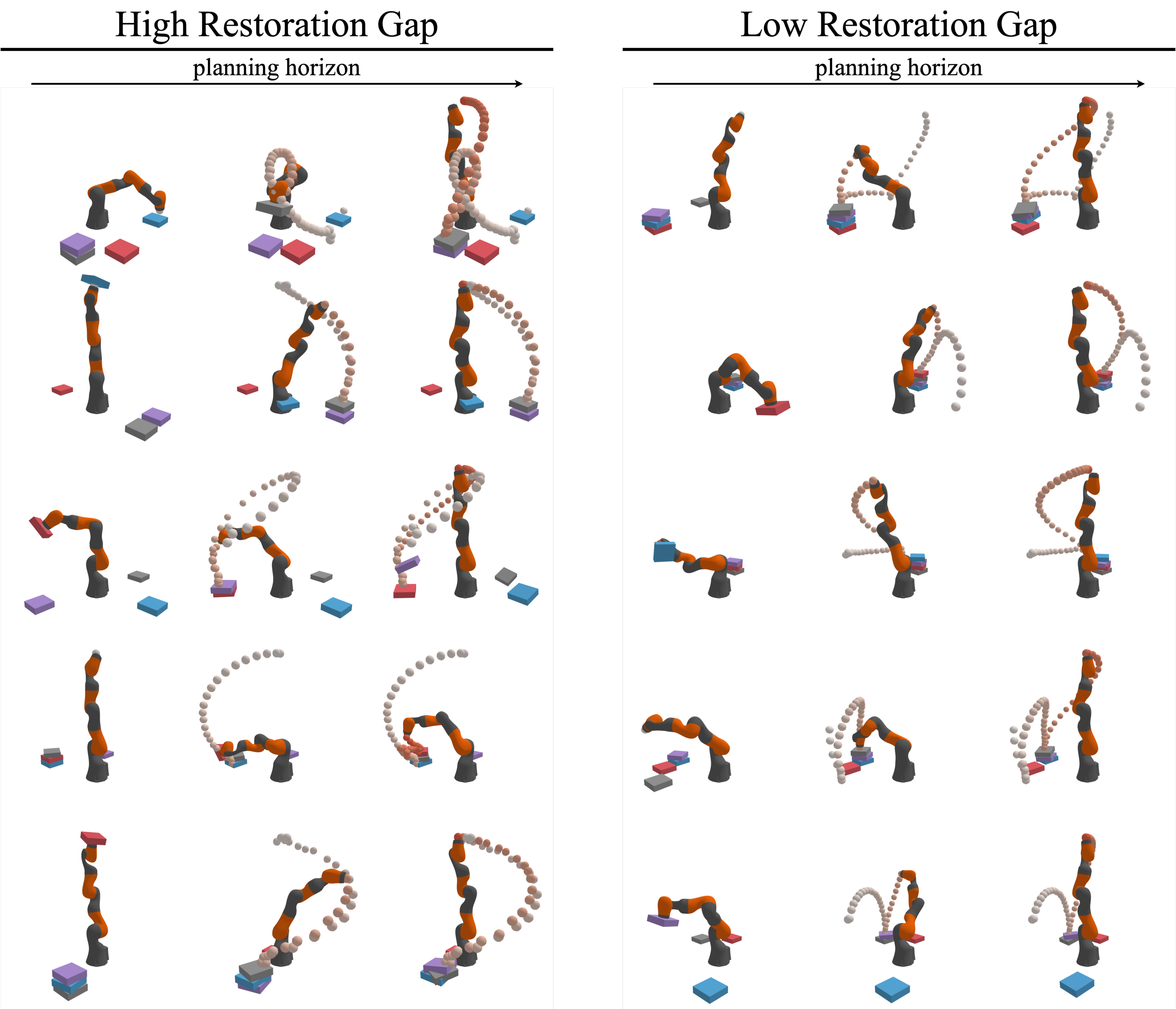}
    \caption{Visual comparison of plans of low and high restoration gap values generated by Diffuser in the Unconditional Block Stacking environment.}
    \label{fig:vis_kuka_plans}
\end{figure*}

\section{Experimental Setup Details}\label{C_Setup}

\subsection{Environments}

\paragraph{Maze2D}

Maze2D environments \cite{fu2020d4rl} involve a navigation task where an agent needs to plan for a long-horizon to navigate toward a distant target goal location. A reward is not provided except when the agent successfully reaches the target goal where it only gets a reward of 1. Maze2D environments consist of three distinct maze layouts: "U-Maze", "Medium", and "Large" each of which offers different levels of difficulty. In Maze2D environments, there are two tasks: a single-task, where the goal location is fixed, and a multi-task, which we refer to as Multi2D, where the goal location is randomly selected at the beginning of every episode. Details about Maze2D environments are summarized in Table \ref{tab:maze2d_env_details}.

\begin{table*}[ht]
    \caption{Environment details for Maze2D experiments.}
    \label{tab:maze2d_env_details}
    \centering
    \vspace{0.2cm}
    \begin{tabular}{c|c|c|c}
        \toprule
        \textbf{} & \textbf{Maze2D-Large} & \textbf{Maze2D-Medium} & \textbf{Maze2D-UMaze} \\
        \midrule
        State space $\mathcal{S}$ & $\in \mathbb{R}^{4}$ & $\in \mathbb{R}^{4}$ & $\in \mathbb{R}^{4}$ \\
        Action space $\mathcal{A}$ & $\in \mathbb{R}^{2}$ & $\in \mathbb{R}^{2}$ & $\in \mathbb{R}^{2}$ \\
        Goal space $\mathcal{G}$ & $\in \mathbb{R}^{2}$ & $\in \mathbb{R}^{2}$ & $\in \mathbb{R}^{2}$ \\
        Episode length & 800 & 600 & 300 \\
        \bottomrule
    \end{tabular}
    \vspace{-0.3cm}
\end{table*}

\paragraph{Locomotion}

Gym-MuJoCo locomotion tasks \cite{fu2020d4rl} are widely used benchmarks for evaluating algorithms on heterogeneous data with varying quality. The "Medium" dataset is generated by collecting 1M samples from an SAC agent \cite{haarnoja2018soft} trained to approximately one-third of the performance level compared to an expert. The "Medium-Replay" dataset includes all samples acquired during training until a "Medium" level of performance is achieved. The "Medium-Expert" dataset is composed of an equal mixture of expert demonstrations and sub-optimal data. Environmental details for Locomotion experiments are summarized in Table \ref{tab:locomotion_env_details}.

\begin{table*}[ht]
    \caption{Environment details for Locomotion experiments.}
    \label{tab:locomotion_env_details}
    \centering
    \vspace{0.2cm}
    \begin{tabular}{c|c|c|c}
        \toprule
        \textbf{} & \textbf{Hopper-*} & \textbf{Walker2d-*} & \textbf{Halfcheetah-*} \\
        \midrule
        State space $\mathcal{S}$ & $\in \mathbb{R}^{11}$ & $\in \mathbb{R}^{17}$ & $\in \mathbb{R}^{17}$ \\
        Action space $\mathcal{A}$ & $\in \mathbb{R}^{3}$ & $\in \mathbb{R}^{6}$ & $\in \mathbb{R}^{6}$ \\
        Episode length & 1000 & 1000 & 1000 \\
        \bottomrule
    \end{tabular}
    \vspace{-0.3cm}
\end{table*}

\paragraph{Block Stacking}

The Block Stacking suite is a benchmark used to evaluate the model performance for a large state space, which employs a Kuka iiwa robotic arm \cite{janner2022planning}. The offline demonstration data required to train a policy model or a diffusion planning model is obtained through the application of PDDLStream \cite{garrett2020pddlstream}.

The objective of the unconditional stacking task is to construct a tower of blocks with the maximum possible height. In this task, the agent observes the state including the joint position of the robot, as well as the position and rotation of each block, and then commands the robot's desired joint position while performing the grasping action to pick up the blocks. In the conditional stacking task, where the objective is to stack blocks in a specified order, the agent observes the same state as in the unconditional stacking task, but it additionally observes the index of the block which indicates the order in which the blocks should be stacked.


We employ the same diffusion model for both tasks, but in the conditional stacking task, we additionally utilize a value function to guide the diffusion planner in stacking blocks according to specified conditions. Details about Block Stacking environments are summarized in Table \ref{tab:kuka_env_details}.

\begin{table*}[ht]
    \caption{Environment details for Block Stacking experiments.}
    \label{tab:kuka_env_details}
    \centering
    \vspace{0.2cm}
    \begin{tabular}{c|c|c}
        \toprule
        \textbf{} & \textbf{Unconditional Stacking} & \textbf{Conditional Stacking} \\
        \midrule
        State space $\mathcal{S}$ & $\in \mathbb{R}^{39}$ & $\in \mathbb{R}^{43}$ \\
        Action space $\mathcal{A}$ & $\in \mathbb{R}^{11}$ & $\in \mathbb{R}^{11}$ \\
        Episode length & 384 & 384 \\
        \bottomrule
    \end{tabular}
    \vspace{-0.3cm}
\end{table*}

\subsection{Other Metrics for the Assessment of Individually Generated Samples}\label{other_metrics}

\citet{kynkaanniemi2019improved} present the notions of improved precision and recall to the study of generative models. Precision is determined by examining if each generated sample falls within the estimated manifold of real samples. Symmetrically, recall is computed by checking if each real sample resides within the estimated manifold of generated samples. Real and generated sample feature vectors are represented as $\phi_r$ and $\phi_g$, respectively, and the corresponding sets of these feature vectors are denoted by $\boldsymbol{\Phi}_r$ and $\boldsymbol{\Phi}_g$. To compute improved precision and recall, the manifolds of real and generated samples are estimated using the sets of $k$-NN hyperspheres for each sample:
\begin{align}
\label{eq:knn manifold}
\text{manifold}_k(\boldsymbol{\Phi})=\bigcup_{\phi' \in \boldsymbol{\Phi}} B_k(\phi', \boldsymbol{\Phi}), \quad B_k(\phi', \boldsymbol{\Phi})=\{\phi \; \big\vert \; \|\phi' - \phi\|_2 \leq \|\phi' - \text{NN}_k(\phi', \boldsymbol{\Phi})\|_2\}
\end{align}
where $\text{NN}_k(\phi', \boldsymbol{\Phi})$ returns the $k$th nearest feature vector of $\phi'$ from the set $\boldsymbol{\Phi}$. $B_k(\phi', \boldsymbol{\Phi})$ is the $k$-NN hyperspheres with the radius of $\|\phi' - \text{NN}_k(\phi', \boldsymbol{\Phi})\|_2$.

Although the improved precision metric provides a way to evaluate the quality of a population of generated samples, it yields only a binary result for an individual sample, making it unsuitable for ranking individual samples by their quality. In contrast, realism score \cite{kynkaanniemi2019improved} and rarity score \cite{han2022rarity} offer a continuous extension of improved precision and recall, enabling the assessment of individually generated sample quality.

\paragraph{Realism Score}

The realism score quantifies the maximum inverse relative distance of a generated sample within a $k$-NN hypersphere originating from real data.
\begin{align}
\label{eq:realism score}
\text{realism score}(\phi_g, \boldsymbol{\Phi}_r) = \max_{\phi_r} \frac{\|\phi_r - \text{NN}_k(\phi_r, \boldsymbol{\Phi}_r)\|_2}{\|\phi_g - \phi_r\|_2}.
\end{align}
A high realism score is achieved when the relative distance between a generated sample and a real sample is small, compared to the radius of the real sample's $k$-NN hypersphere.

\paragraph{Rarity Score}

The rarity score measures the radius of the smallest nearest-neighbor sphere that contains the generated sample.
\begin{align}
\label{eq:rarity score}
\text{rarity score}(\phi_g, \boldsymbol{\Phi}_r) = \min_{r,s.t.\phi_g \in B_k(\phi_r, \boldsymbol{\Phi}_r)} \|\phi_r - \text{NN}_k(\phi_r, \boldsymbol{\Phi}_r)\|_2.
\end{align}
This is grounded in the hypothesis that normal samples will be closely grouped, whereas unique and rare samples will be dispersed in the feature space.

In an attempt to compare the restoration gap against negative realism scores and rarity scores,  we apply the parameters suggested in \cite{han2022rarity}, making use of $k=3$ and 30,000 real samples to approximate the real manifold and calculate the corresponding scores: negative realism and rarity.

\section{Implementation Details and Hyperparameters}\label{D_Details}


\subsection{Implementation of Gap Predictor}
We employ a temporal U-Net architecture, with repeated convolutional residual blocks, for parameterizing $\mathcal{G}_{\psi}$ as introduced in Diffuser \cite{janner2022planning}. By using the pre-trained down blocks from Diffuser's diffusion model as our feature extraction module and keeping it fixed during training, we can achieve enhanced performance and reduce training costs. The hyperparameters for training the gap predictor are summarized in Table \ref{tab:hyparam_gap_predictor}.

\begin{table*}[ht]
    \centering
    \caption{Hyperparameters used for training the gap predictor. Values that are within brackets are separately tuned through a grid search.}
    \label{tab:hyparam_gap_predictor}
    \begin{tabular}{c|c|c|c}
        \toprule
        \textbf{Hyperparameter} & \textbf{Maze2D} & \textbf{Locomotion} & \textbf{Block Stacking} \\ 
        \midrule
        \# synthetic data & 500000 & 500000 & 500000 \\
        Observation normalization & Yes & Yes & Yes \\
        Gap predictor learning rate & 0.0002 & 0.0002 & \{0.00002, \textbf{0.0002}, 0.001\} \\
        Gap predictor batch size & 32 & 32 & \{32, 64, \textbf{128}, 256\} \\
        Gap predictor train steps & 2000000 & 2000000 & 2000000 \\
        $\hat{t}$ for perturbation magnitude & 0.9 & 0.9 & 0.9 \\ 
        \bottomrule
    \end{tabular}
\end{table*}

\subsection{Implementation of Restoration Gap Guidance}
In this section, we document hyperparameters employed for restoration gap guidance. We adopt the hyperparameters from Diffuser \cite{janner2022planning} for determining the planning horizon and diffusion steps. Furthermore, the values of $\alpha$, $\beta$, and $\lambda$ are determined through a grid search. The hyperparameters utilized for the Maze2D experiments are presented in Table \ref{tab:hyparam_maze}, while those for the Locomotion experiments are summarized in Table \ref{tab:hyparam_halfcheetah}, \ref{tab:hyparam_hopper}, and \ref{tab:hyparam_walker2d}. The hyperparameters for the Block Stacking experiments are provided in Table \ref{tab:hyparam_kuka}.

The planning performance exhibits a clear trend based on the choices of $\alpha$, $\beta$, and $\lambda$ parameters. This allows us to perform a grid search using the relatively minimal number of evaluation episodes. As depicted in Figure \ref{fig:planning_budget}, the planning performance initially increases with rising values of $\lambda$ but begins to decline when the values become excessively large. Specifically, we conduct 10, 15, and 10 evaluation episodes for the Maze2D, Locomotion, and Block Stacking experiments, respectively.

\begin{table*}[!htb]
    \centering
    \caption{Specific hyperparameters for Maze2D experiments. Values that are within brackets are separately tuned through a grid search.} 
    \label{tab:hyparam_maze}
    \resizebox{\textwidth}{!}{%
    \begin{tabular}{c|c|c|c}
        \toprule
        \textbf{Hyperparameter} & \textbf{Large} & \textbf{Medium} & \textbf{UMaze} \\ 
        \midrule
        Planning horizon & 384 & 256 & 128 \\
        Diffusion steps & 256 & 256 & 64 \\
        \# samples for MC estimate of restoration gap & 10 & 10 & 10\\
        $\alpha$ for scaling the overall guidance & \{\textbf{0.05}, 0.1\} & \{\textbf{0.05}, 0.1\} & \{\textbf{0.05}, 0.1\} \\
        $\beta$ for scaling restoration gap guidance & 1.0 & 1.0 & 1.0 \\
        $\lambda$ for scaling attribution map regularization & \{0.1, 1.0, \textbf{3.0}\} & \{\textbf{0.1}, 1.0, 3.0\} & \{0.1, 1.0, \textbf{3.0}\} \\
        \bottomrule
    \end{tabular}
    }
\end{table*}

\begin{table*}[!htb]
    \centering
    \caption{Specific hyperparameters for HalfCheetah experiments. Values that are within brackets are separately tuned through a grid search.} 
    \label{tab:hyparam_halfcheetah}
    \resizebox{\textwidth}{!}{%
    \begin{tabular}{c|c|c|c}
        \toprule
        \textbf{Hyperparameter} & \textbf{Med-Expert} & \textbf{Medium} & \textbf{Med-Replay} \\ 
        \midrule
        Planning horizon & 4 & 32 & 32 \\
        Diffusion steps & 20 & 20 & 20 \\
        \# samples for MC estimate of restoration gap & 64 & 64 & 64\\
        $\alpha$ for scaling the overall guidance & \{\textbf{0.01}, 0.1\} & \{0.01, \textbf{0.1}\} & \{\textbf{0.01}, 0.1\} \\
        $\beta$ for scaling restoration gap guidance &  \{0.1, 1.0, \textbf{10.0}\} & \{0.1, \textbf{1.0}, 10.0\} & \{0.1, 1.0, \textbf{10.0}\} \\
        $\lambda$ for scaling attribution map regularization & \{0.001, 0.01, 0.1, \textbf{1.0}\} & \{0.001, 0.01, 0.1, \textbf{1.0}\} & \{0.001, \textbf{0.01}, 0.1, 1.0\} \\
        \bottomrule
    \end{tabular}
    }
\end{table*}

\begin{table*}[!htb]
    \centering
    \caption{Specific hyperparameters for Hopper experiments. Values that are within brackets are separately tuned through a grid search.} 
    \label{tab:hyparam_hopper}
    \resizebox{\textwidth}{!}{%
    \begin{tabular}{c|c|c|c}
        \toprule
        \textbf{Hyperparameter} & \textbf{Med-Expert} & \textbf{Medium} & \textbf{Med-Replay} \\ 
        \midrule
        Planning horizon & 32 & 32 & 32 \\
        Diffusion steps & 20 & 20 & 20 \\
        \# samples for MC estimate of restoration gap & 64 & 64 & 64\\
        $\alpha$ for scaling the overall guidance & \{0.01, \textbf{0.1}\} & \{0.01, \textbf{0.1}\} & \{0.01, \textbf{0.1}\} \\
        $\beta$ for scaling restoration gap guidance &  \{0.1, \textbf{1.0}, 10.0\} & \{0.1, \textbf{1.0}, 10.0\} & \{0.1, 1.0, \textbf{10.0}\} \\
        $\lambda$ for scaling attribution map regularization & \{\textbf{0.001}, 0.01, 0.1, 1.0\} & \{0.001, 0.01, \textbf{0.1}, 1.0\} & \{0.001, 0.01, 0.1, \textbf{1.0}\} \\
        \bottomrule
    \end{tabular}
    }
\end{table*}

\begin{table*}[!htb]
    \centering
    \caption{Specific hyperparameters for Walker2d experiments. Values that are within brackets are separately tuned through a grid search.} 
    \label{tab:hyparam_walker2d}
    \resizebox{\textwidth}{!}{%
    \begin{tabular}{c|c|c|c}
        \toprule
        \textbf{Hyperparameter} & \textbf{Med-Expert} & \textbf{Medium} & \textbf{Med-Replay} \\ 
        \midrule
        Planning horizon & 32 & 32 & 32 \\
        Diffusion steps & 20 & 20 & 20 \\
        \# samples for MC estimate of restoration gap & 64 & 64 & 64\\
        $\alpha$ for scaling the overall guidance & \{\textbf{0.01}, 0.1\} & \{\textbf{0.01}, 0.1\} & \{0.01, \textbf{0.1}\} \\
        $\beta$ for scaling restoration gap guidance &  \{0.1, \textbf{1.0}, 10.0\} & \{0.1, 1.0, \textbf{10.0}\} & \{\textbf{0.1}, 1.0, 10.0\} \\
        $\lambda$ for scaling attribution map regularization & \{0.001, 0.01, 0.1, \textbf{1.0}\} & \{0.001, 0.01, 0.1, \textbf{1.0}\} & \{\textbf{0.001}, 0.01, 0.1, 1.0\} \\
        \bottomrule
    \end{tabular}
    }
\end{table*}

\begin{table*}[!htb]
    \centering
    \caption{Specific hyperparameters for Block Stacking experiments. Values that are within brackets are separately tuned through a grid search.} 
    \label{tab:hyparam_kuka}
    \resizebox{\textwidth}{!}{%
    \begin{tabular}{c|c|c}
        \toprule
        \textbf{Hyperparameter} & \textbf{Unconditional Stacking} & \textbf{Conditional Stacking} \\ 
        \midrule
        Planning horizon & 128 & 128 \\
        Diffusion steps & 1000 & 1000 \\
        \# samples for MC estimate of restoration gap & 5 & 5 \\
        $\alpha$ for guide scale & \{0.01, \textbf{0.02}, 0.05, 0.1, 0.2, 0.5\} & \{0.1, 0.3, \textbf{0.5}, 0.7\} \\
        $\beta$ for scaling restoration gap guidance & 1.0 & \{\textbf{0.01}, 0.02, 0.04, 0.07\} \\
        $\lambda$ for scaling attribution map regularization & \{0.1, \textbf{0.3}, 0.5, 1.0, 3.0, 5.0\} & \{0.001, \textbf{0.003}, 0.005, 0.008\} \\
        \bottomrule
    \end{tabular}
    }
\end{table*}

\subsection{Implementation of Attribution Map Regularizer}
For the Maze2D experiments and the Block Stacking experiments, we adopt Grad-CAM \cite{selvaraju2017grad}, while we employ DeepLIFT \cite{shrikumar2017learning} for the Locomotion experiments, as the attribution method. The rationale for the choice of each method is that both Grad-CAM and DeepLIFT offer simplicity and efficiency in computation. Specifically, we apply the DeepLIFT for the Locomotion experiments due to their relatively smaller planning horizon compared to the other tasks. This is because Grad-CAM compresses saliency information into a single value. However, it is worth noting that our proposed method has shown robustness across different input attribution methods, as demonstrated in Table \ref{ablation_attribution}.

\subsection{The Amount of Computation}
Our proposed method, refining diffusion planner, requires training the gap predictor which estimates the restoration gap. For this purpose, we generate synthetic data of 500,000 plans generated by Diffuser. The generation process takes approximately 30 to 50 hours, depending on the situation, on a single NVIDIA Quadro 8000 GPU. The training time of the gap predictor on the same GPU can range from 5 to 8 hours.


\section{Baseline Performance Sources}\label{B_Sources}

\subsection{Maze2D Tasks}

The scores for CQL are taken from Table 2 in \citet{fu2020d4rl}. The scores for MPPI and IQL are taken from Table 1 in \citet{janner2022planning}.

\subsection{Locomotion Tasks}

The scores for BC, CQL, and IQL are found in Table 1 of \citet{kostrikov2021offline}, while DT scores are taken from Table 2 in \citet{chen2021decision}, TT from Table 1 in \citet{janner2021offline}, MOPO from Table 1 in \citet{yu2020mopo}, MOReL from Table 2 in \citet{kidambi2020morel}, MBOP from Table 1 in \citet{argenson2020model}, and Diffuser from Table 2 in \citet{janner2022planning}.

\subsection{Block Stacking Tasks}

The scores corresponding to BCQ and CQL are obtained from Table 3 of \citet{janner2022planning}. To obtain the scores for Diffuser, the official implementation and model provided by the authors are used, which can be found at \url{https://github.com/jannerm/diffuser}.
\end{appendices}


\end{document}